% This is samplepaper.tex, a sample chapter demonstrating the
% LLNCS macro package for Springer Computer Science proceedings;
% Version 2.20 of 2017/10/04
%
\documentclass[runningheads]{llncs}
\usepackage{amsfonts,amssymb,amsmath,color}
\usepackage[vlined,ruled,linesnumbered]{lib/algorithm2e}
\usepackage{lib/algpseudocode,lib/algcompatible, lib/algo}
\usepackage{graphicx}

% Used for displaying a sample figure. If possible, figure files should
% be included in EPS format.
%
% If you use the hyperref package, please uncomment the following line
% to display URLs in blue roman font according to Springer's eBook style:
% \renewcommand\UrlFont{\color{blue}\rmfamily}

\begin{document}
\title{Bisimulations for Neural Network Reduction}
%
%\titlerunning{Abbreviated paper title}
% If the paper title is too long for the running head, you can set
% an abbreviated paper title here
%
\author{Pavithra Prabhakar}
\authorrunning{P. Prabhakar}
% First names are abbreviated in the running head.
% If there are more than two authors, 'et al.' is used.
%
\institute{Kansas State University, Manhattan, KS, 66506\\
\email{pprabhakar@ksu.edu}}
\maketitle              % typeset the header of the contribution
\begin{abstract}
We present a notion of bisimulation that induces a reduced network
which is semantically equivalent to the given neural network.
We provide a minimization algorithm to construct the smallest
bisimulation equivalent network.
Reductions that construct bisimulation equivalent neural networks are
limited in the scale of reduction.
We present an approximate notion of bisimulation that provides
semantic closeness, rather than, semantic equivalence, and quantify
semantic deviation between the neural networks that are approximately
bisimilar. 
The latter provides a trade-off between the amount of reduction and
deviations in the semantics.
\keywords{Neural Networks  \and Bisimulation \and Verification \and
  Reduction.}
\end{abstract}

\newcommand{\PP}[1] {{\color{red} #1}}
\newcommand{\naturals}{\ensuremath{\mathbb{N}}}
\newcommand{\reals}{\ensuremath{\mathbb{R}}}

\newcommand{\finer}{\ensuremath{\preceq}}
\renewcommand{\O}{\ensuremath{\textit{O}}}

%%%abstraction
\renewcommand{\sb}[1]{\ensuremath{\textit{SplitActBias}(#1)}}
\renewcommand{\sp}[2]{\ensuremath{\textit{SplitPre}(#1, #2)}}
\newcommand{\dist}[2]{\ensuremath{|#1 - #2|}}
\newcommand{\alp}[1]{\ensuremath{\alpha(#1)}}
\newcommand{\alpa}[2]{\ensuremath{\alpha^{#2}(#1)}}
\newcommand{\absf}[3]{\ensuremath{\alpha_{(#1,#2)}^{#3}}}
%action at each layer
\newcommand{\Ah}[1]{\ensuremath{\widehat{A}_{#1}}}
%size at each layer
\newcommand{\Lh}[1]{\ensuremath{\widehat{L}_{#1}}}
%biases at each layer
%\newcommand{\bh}[2]{\ensuremath{\widehat{b}_{#1}^{#2}}}
\newcommand{\bh}[1]{\ensuremath{\widehat{b}_{#1}}}
%weight between two layers
%\newcommand{\Wh}[2]{\ensuremath{\widehat{W}_{#1}^{#2}}}
\newcommand{\Wh}[1]{\ensuremath{\widehat{W}_{#1}}}
%nodes at layers
\newcommand{\Sh}{\ensuremath{\widehat{\mathcal{S}}}}

\newcommand{\absa}[3]{\ensuremath{#1/_{#3}#2}}
\newcommand{\abs}[2]{\ensuremath{#1/#2}}
\newcommand{\ps}[2]{\ensuremath{\textit{PreSum}_{#1}^{#2}}}

%%%definition
%semantics

\newcommand{\norm}[1]{\ensuremath{|#1|}}
\renewcommand{\a}{\ensuremath{\textit{a}}}
\newcommand{\bc}{\ensuremath{\textit{b}}}
\renewcommand{\P}{\ensuremath{\mathcal{P}}}
\newcommand{\comp}{\ensuremath{\circ}}
\newcommand{\val}[1]{\ensuremath{\textit{Val}(#1)}}
\newcommand{\sem}[1]{\ensuremath{[\![#1]\!]}}
\newcommand{\seml}[2]{\ensuremath{[\![#1]\!]_{#2}}}
\newcommand{\semc}[2]{\ensuremath{[\![#1]\!]^{#2}}}
%syntax
\newcommand{\actions}{\ensuremath{\mathcal{A}ct}}
%action at each layer
\newcommand{\A}[1]{\ensuremath{A_{#1}}}
%size at each layer
\renewcommand{\L}[1]{\ensuremath{L_{#1}}}
%biases at each layer
%\renewcommand{\b}[2]{\ensuremath{b_{#1}^{#2}}}
\renewcommand{\b}[1]{\ensuremath{b_{#1}}}
%weight between two layers
%\newcommand{\W}[2]{\ensuremath{W_{#1}^{#2}}}
\newcommand{\W}[1]{\ensuremath{W_{#1}}}
%nodes at layers
\renewcommand{\S}{\ensuremath{\mathcal{S}}}
\newcommand{\s}{\ensuremath{\textit{s}}}
\renewcommand{\v}{\ensuremath{\textit{v}}}
% for number of layers
\newcommand{\lc}[1]{\ensuremath{L(#1)}}
\renewcommand{\k}{\ensuremath{k}}
%notation for AINN
\newcommand{\N}{\ensuremath{\mathcal{N}}}
\newcommand{\Ns}{\ensuremath{\mathcal{N}^*}}
\newcommand{\ainn}{\ensuremath{\text{AINN}}}
\newcommand{\nn}{\ensuremath{\text{NN}}}
\newcommand{\infn}[1]{\ensuremath{|\!| #1 |\!|_\infty}}

\section{Introduction}
Neural networks ($\nn$) with small size are conducive for both
automated analysis and explainability.
Rigorous automated analysis using formal methods has gained momentum
in recent years owing to the safety-criticality of the application
domains in which $\nn$ are
deployed~\cite{bunel,smtnn1,reluplex,taylor-nn3,taylor-survey,survey2019}. 
For instance, $\nn$ are an integral part of control, perception and
guidance of autonomous vehicles.
However, the scalability of these analysis techniques, for instance,
for computing output range for safety analysis~\cite{range,reluplex}, is
limited by the large size of the neural networks encountered and the 
computational complexity due to the presence of non-linear activation
functions. 
In this paper, we borrow ideas from formal methods to design novel
network reduction techniques with formal relations between the given
and the reduced networks, that can be applied to reduce the
verification time.
It can also potentially impact explainability by presenting to the
user a smaller network with guaranteed bounds on the deviation from
the original network.

Bisimulation~\cite{milner89} is a classical notion of equivalence between
systems in process algebra that guarantees that processes that are
bisimilar satisfy the same set of properties specified in certain
branching time logics~\cite{baier08}.
A bisimulation is an equivalence relation on the states of a system that
requires similar behaviors with respect to one step of computation,
which then inductively guarantees global behavioral equivalence.
Bisimulation algorithm~\cite{baier08} allows one to construct the smallest systems,
bisimulation quotients, that are bisimilar to a given (finite state)
system.

Our first result consists of a definition of bisimulation for neural
networks, namely, $\nn$-bisimulation, that defines a notion of
equivalence between neural networks.
The challenge arises from the fact that neural networks semantically have
multiple parallel threads of computation that are both branching and
merging at each step of computation.
We observe that the global equivalence can be established by imposing
a \emph{one-step backward pre-sum equivalence}, wherein we require two
nodes that belong to the same class to agree on the biases, the
activation functions, and the pre-sums, wherein a pre-sum corresponds
to the sum of the weights on the incoming edges from a given
equivalence class. 
Our notion resembles that of probabilistic
bisimulation~\cite{bisim-larsen}, however, our notion is based on pre-sum
equivalences rather than post-sum equivalences.
We define a quotienting operation on an $\nn$ with respect to a
bisimulation that yields a smaller network which is
input-output equivalent to the given network.
We also show that there exists a coarsest bisimulation which yields
the smallest neural network with respect to the quotienting operation.
We provide a minimization algorithm that outputs this smallest neural network.

The notion of bisimulation can be stringent, since, it preserves the
exact input-output relation. 
It has been observed, for instance, in the context of control systems,
that a strict notion of equivalence, such as, bisimulation, does not
allow for drastic reduction in state-space, thereby, motivating the notion of
approximate bisimulation. 
Approximate bisimulations~\cite{girard07,girard08} have a notion of distance
between states, and allow a bounded $\epsilon$ deviation between the
executions of the systems in each step.
The notion of approximate bisimulation was successfully used to
construct smaller systems in the setting of dynamical systems and control
synthesis~\cite{girardhscc08}.

We extend the notion of $\nn$-bisimulation to an approximate notion,
wherein we require nodes belonging to the same class to have bounded
deviation, $\epsilon$, in the biases and the pre-sums.
The quotienting operation no more results in a unique reduced network,
but a set of reduced networks.
Moreover, these reduced networks may not have the same input-output
relation as the given neural network.
However, we provide a bound on the deviation in the semantics of two
approximately bisimilar $\nn$s.
It gives rise to a nice trade-off between the amount of reduction and
the deviation in the semantics, that translates to a trade-off in the
precision and verification time in an approximation based verification
scheme.

% Constructing smaller neural network has the potential to scale
% verification.
% And explaining NN becomes easier.

\paragraph{Related work.}
  Neural network reduction techniques have been explored in different
  contexts.
  There is extensive literature on compression techniques, see, for instance,
  surveys on network compression~\cite{neural-compression-1,neural-compression-2}.  
However, these techniques typically do not provide formal guarantees
on the relation between the given and reduced systems.
Abstraction techniques~\cite{neurips19,neural-sas20,neural-cav20}
computing over-approximations of the input-output relations have been
explored in several works, however, they use slightly different kinds
of networks such as interval neural networks and abstract neural
networks, or are limited to certain kinds of activation functions such as ReLU.
  Notions of bisimulation for DNNs have not been explored much in the
  literature. Equivalence between DNNs is
  explored~\cite{deepabstract}, however, the work is restricted to
  ReLU functions and does not consider approximate notions.

\section{Preliminaries}
Let $[k]$ denote the set $\{0, 1, \cdots, k\}$ and $(k]$ the set $\{1,
2, \cdots, k\}$.
Let $\reals$ denote the set of real numbers.
We use $\norm{x}$ to denote the absolute value of $x \in \reals$.
Given a set $A$, we use $\norm{A}$ to denote the number of elements of
$A$.
Given a function $f: A \to \reals$, we define the infinity norm of $f$
to be the supremum of the absolute values of elements in the range of
$f$, that is, $\infn{f} = \sup_{a \in A} \norm{f(a)}$.
Given functions $f: A \to B$ and $g: B \to C$, the composition of $f$
and $g$, $g \comp f: A \to C$, is given by, for all $a \in A$,  $g \comp f(a) = g(f(a))$.

\paragraph{Partitions.}
Given a set $\S$, a (finite) parition of $\S$ is a set $\P = \{\S_1, \cdots,
\S_n\}$, such that $\bigcup_i \S_i = \P$ and $\S_i \cap \S_j =
\emptyset$ for all $i \not= j$.
We refer to each element of a partition as a region or a group.
A partition $\P$ of $\S$ can be seen as an equivalence relation on
$\S$, given by the relation $s_1 \P s_2$ whenever $s_1$ and $s_2$
belong to the same group of the partition.
Given two partitions $\P$ and $\P'$ , we say that $\P$ is finer than
$\P'$ (or equivalently, $\P'$ is coarser than $\P$), denoted $\P
\finer \P'$, if for every $S \in \P$, there exists $S' \in \P'$ such
that $S \subseteq \S'$.

% The ReLU function defined as $f(x) = \max{x, 0}$ is Lipschitz
% continuous with $\lc{f} = 1$.
% If $x, y \geq 0$, then $\dist{f(x)}{f(y)} = \dist{x}{y} =
% \lc{f}\dist{x}{y}$.
% If $x, y \leq 0$, then $\dist{f(x)}{f(y)} = \dist{0}{0} \leq 
% \lc{f}\dist{x}{y}$.
% If $x \leq 0$ and $y \geq 0$, then $\dist{f(x)}{f(y)} = \dist{0}{y}
% \leq \dist{x}{y} \leq \lc{f}\dist{x}{y}$.

\section{Neural Networks}
In this section, we present the preliminaries regarding the neural
network.
Recall that a neural network ($\nn$) consists of a layered set of
nodes or neurons, including an input layer, an output layer and one or
more hidden layers.
Each node except those in the input layer are annotated with a bias
and an activation function, and there are weighted edges between nodes
of adjacent layers.
We capture these elements of a neural network using a tuple in the
following definition. 

\begin{definition} A neural network ($\nn$) is a tuple $\N = \big(\k,
  \actions, \{\S_i\}_{i \in [\k]},  \{\W{i}\}_{i \in (\k]},$
  $\{\b{i}\}_{i \in (\k]}, \{\A{i}\}_{i \in (\k]} \big)$,
where:
\begin{itemize}
\item $\k$ represents the number of layers (except the input layer);
\item $\actions$ is a set of activation functions;
\item for each $i \in [k]$, $\S_i$ is a set of nodes of layer $i$, we
  assume $\S_i \cap \S_j =\emptyset$ for $i \neq j$; 
\item for each $i \in (k]$, $\W{i}: \S_{i-1} \times \S_{i} \to \reals$ is the weight function
  that captures the weight on the edges between nodes at layer
  $i - 1$
  and $i$;
\item for each $i \in (k]$, $\b{i} : \S_i  \to \reals$ is the bias
  function that associates a bias with nodes of layer $i$;
\item for each $i \in (k]$, $\A{i} : \S_i \to \actions$ is an
  activation association function that associates an activation
  function with each node of layer $i$.
		\end{itemize}
	\end{definition}
$\S_0$ and $\S_{\k}$ are the set of nodes corresponding to the input
and output layers, respectively.
We will fix the $\nn$ $\N = \big(\k,
  \actions, \{\S_i\}_{i \in [\k]},  \{\W{i}\}_{i \in (\k]},
  \{\b{i}\}_{i \in (\k]}, \{\A{i}\}_{i \in (\k]} \big)$ for the rest
  of the paper.

\begin{figure}
  \begin{center}
    \includegraphics[scale=0.4]{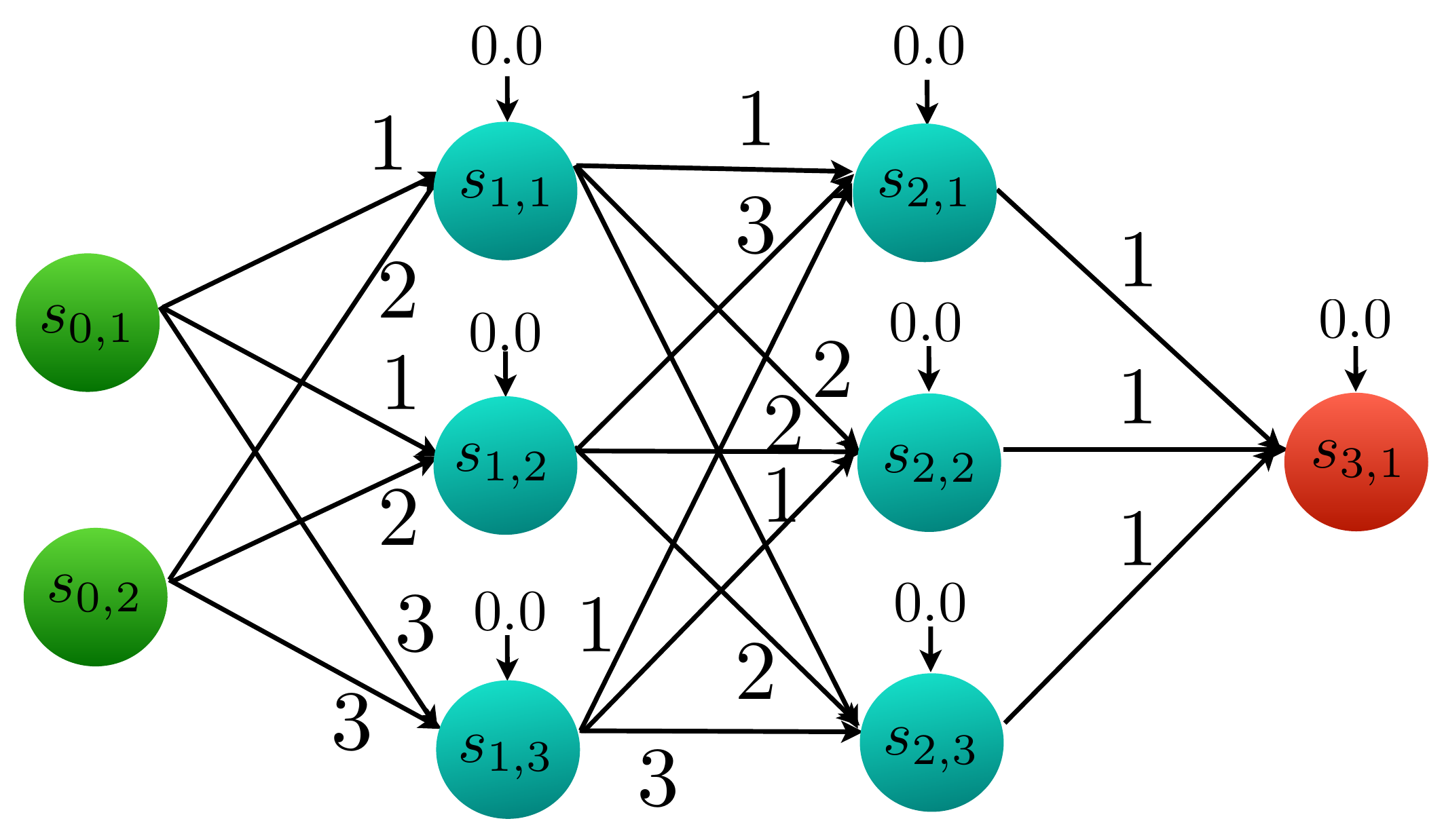}
    \caption{Neural Network $\N$}
    \label{fig:bisim}
    \end{center}
\end{figure}

\begin{example}
The neural network $\N$ shown in Figure \ref{fig:bisim} consists of
an input layer with $2$ nodes, $2$ hidden layers with $3$ nodes each,
and an output layer.
The weights on the edges are shown, for instance, $\W{2}(\s_{1, 2},
\s_{2,2}) = 2$. The biases are all $0$s and the activation
functions are all ReLUs (not shown).
\end{example}

In the sequel, the central notion to the definition of bisimulation
will be the total weight on the incoming edges for a node  $\s'$ of the
$i$-th layer from a set of nodes $\S$ of the $i-1$-st layer.
We will capture this using the notion of a pre-sum, denoted
$\ps{i}{\N}(\S, \s')$.
For instance, $\ps{2}{\N}(\{\s_{1,1}, \s_{1,2}\}, \s_{2, 2}) = 2 + 2 = 4$.
\begin{definition}
Given a set $\S \subseteq \S_{i-1}$ and $\s' \in \S_{i}$,  we define
$\ps{i}{\N}(\S, \s') = \sum_{\s \in \S} \W{i}(\s, \s')$.
\end{definition}

Next, we capture the operational behavior of a neural network.
A valuation $\v$ for the $i$-th layer of $\N$ refers to an assignment of real-values to all the
nodes in $\S_i$, that is, $\v: \S_i \to \reals$.
Let $\val{\S_i}$ denote the set of all valuations for the $i$-th
  layer of $\N$.
  By the operational semantics of $\N$, we mean the assignments for
  all the layers of $\N$, that are obtained from an assignment for the
  input layer.
  We define $\seml{\N}{i}(v)$, which given a valuation $v$ for layer
  $i-1$, returns the corresponding valuation for layer $i$ according
  to the semantics of $\N$.
  The valuation for the output layer of $\N$ is then obtained by
  the composition of the functions $\seml{\N}{i}$.

  \begin{definition} The semantics of the $i$-the layer is the
    function $\seml{\N}{i}: \val{\S_{i-1}} \to  \val{\S_{i}}$, where
    for any $\v \in \val{\S_{i-1}}$,  $\seml{\N}{i}(\v) = \v'$, given by
\[\forall \s' \in \S_{i}, \v'(\s') = \A{i}(\s')\big(\sum_{\s \in \S_{i-1}} \W{i}(\s,\s')\v(\s)
  +\b{i}(\s')\big).\]
\end{definition}
To capture the input-output semantics, we compose these one layer
semantics.
More precisely, we define $\semc{\N}{i}$ to be the composition of the
first $i$ layers, that is, $\semc{\N}{i}(\v)$ provides the valuation
of the $i$-th layer given $\v$ as input. It is defined inductively as:
\[\semc{\N}{1} = \seml{\N}{1}\]
\[\forall i \in (k]\backslash \{1\}, \semc{\N}{i} = \seml{\N}{i} \comp \semc{\N}{i-1}\]

\begin{definition}
The input-output semantic function, represented by $\sem{\N}:  \val{\S_{0}} \to
\val{\S_{k}}$, is defined as:
\[\sem{\N} = \semc{\N}{k}\]
\end{definition}

The notion of bisimulation requires the notion of a partition of the
nodes of $\N$.
We define a partition on $\N$ as an indexed set of partitions each
corresponding to a layer. 
\begin{definition}
A partition of an $\nn \ \N$  is an indexed set $\P =\{\P_i\}_{i \in
  [\k]}$, where for every $i$, $\P_i$ is a partition of $\S_i$. 
\end{definition}

% We will assume that the regions of $\P_0$ and $\P_k$ are singleton
% sets, that is, the nodes in the input and output layers are not grouped
% with other nodes.

% \PP{maybe consistent partition not required}
% Further, we say that the partition is consistent, if the activation
% functions corresponding to nodes belong of a region of the
% partition are the same.
% \begin{definition}
% A partition $\P =\{\P_i\}_{i \in [\k]}$ of an $\nn \ \N$ is consistent
% if the following condition holds: for $i \in (\k]$, $\S \in \P_i$ and
% $\s_1, \s_2 \in \S$, we have $\A{i}(s_1) = \A{i}(s_2)$.
% \end{definition}

\paragraph{A note on Lipschitz Continuity.}
A function $f: \reals^m \to \reals^n$ is said to be Lipschitz continuous
if there exists a constant $\lc{f}$, referred to as Lipschitz constant
for $f$, such that for all $x, y \in \reals^m$,
\[\infn{f(x)-f(y)} \leq \lc{f} \infn{x-y}.\]
Several activation functions including ReLU, Leaky ReLU, SoftPlus,
Tanh, Sigmoid, ArcTan and Softsign are known to be $1$-Lipschitz
continuous~\cite{nips18-lipschitz}, that is, satisfy the above
constraint with $\lc{f} = 1$.
In fact, the function $\sem{\N}$ is itself Lipschitz continuous, when the
activation functions are Lipschitz continuous~\cite{nips18-lipschitz}.
We will use $\lc{\N}$ to denote an upper bound on $\lc{\semc{\N}{i}}$
over all $i$.
Hence, given an input $\v$, we know that $\infn{\semc{\N}{i}(\v)} \leq
\lc{\N} \infn{\v}$.

\section{NN-bisimulation and Semantic Equivalence}
In this section, we define a notion of bisimulation on neural
networks, which induces a reduced system that is equivalent to the
given network.
A partition of $\N$ is an $\nn$-bisimulation if the biases and
activation functions associated with the nodes in any region are the
same, and the pre-sums of nodes in any region with respect to any
region of the previous layer are the same.  
\begin{definition}
An $\nn$-bisimulation for $\N$ is a partition $\P =
\{\P_i\}_{i\in [k]}$ such that  for all $i \in (k]$, $\S \in \P_{i-1}$
and $\s'_1,  \s'_2 \in \S_i$ with $\s'_1 \P_i \s'_2$, the following hold:
\begin{enumerate}
\item
$\A{i}(\s'_1) = \A{i}(\s'_2)$,
\item
$\b{i}(\s'_1) = \b{i}(\s'_2)$, and
\item
  $\ps{i}{\N}(\S, \s'_1) =  \ps{i}{\N}(\S, \s'_2)$.
  \end{enumerate}
\end{definition}
Our notion is inspired by the well-known notion of probabilistic
bisimulation~\cite{bisim-larsen}, where post-sums are used instead of
pre-sums to characterize which nodes have the same branching
structure.
Though neural networks consist of branching in both forward and backward
directions, surprisingly, just pre-sum equivalence suffices to
guarantee input-output relation equivalence.

Bisimulation naturally induces a reduced system, which corresponds to
merging the nodes in a group of the partition, and choosing a
representative node from the group to assign the activation functions,
biases and pre-sums.
We represent the reduced system obtained by taking the quotient of $\N$ with
respect to a bisimulation $\P$ as $\abs{\N}{\P}$. 
\begin{definition}
  \label{def:red}
Given an $\nn$-bisimulation $\P$ for $\N$, the reduced system
$\abs{\N}{\P} = \big(\k,  \actions,
\{{\Sh}_i\}_{i \in [\k]},  \{\Wh{i}\}_{i \in (\k]}, \{\bh{i}\}_{i
  \in (\k]}, \{\Ah{i}\}_{i \in (\k]} \big)$, where:
  \begin{enumerate}
		\item $\forall i \in [\k], \Sh_i = \P_i$;
		\item $\forall i \in (\k], \widehat{\s} \in
                        \Sh_{i-1}, \widehat{\s}' \in \Sh_i,
                        \Wh{i}(\widehat{\s}, \widehat{\s}') = 
                        \ps{i}{\N}(\widehat{\s}, \s')$ for some $\s' \in \widehat{\s}'$.
		\item $\forall i \in (\k], \widehat{\s}' \in
                        \Sh_{i}, \bh{i}(\widehat{\s}') = \b{i}(\s')$ for
                  some $\s' \in \widehat{\s}'$. 
		\item $\forall i \in (\k], \widehat{\s}' \in
                        \Sh_{i}, \Ah{i}(\widehat{\s}') = \A{i}(\s')$ for some
                        $\s' \in \widehat{s}'$. 
		\end{enumerate}
              \end{definition}
Note that though the definition depends on the choice of 
$\s'$, the reduced system is unique, since, from the definition of
$\nn$-bisimulation, the values of biases, activation functions and
pre-sums, corresponding to different choices of
$\s'$ within a group are the same.
We also use just bisimulation to refer to $\nn$-bisimulation.

In order to formally establish the connection between the $\nn$ $\N$
and its reduction $\abs{\N}{\P}$, we define a mapping from the
valuations of $\N$ to those of $\abs{\N}{\P}$, but only for certain
valuations that are consistent in that they map all the
related nodes in $\P$ to the same value.

\begin{definition}
A valuation $\v \in \val{\S_i}$ is $\P$-consistent, if for
all $\s_1, \s_2 \in \S_i$, if $\s_1 \P_i \s_2$, then $\v(\s_1) = \v(\s_2)$.
\end{definition}

Our first result is that a consistent input valuation leads to a
consistent output valuation, when $\P$ is a bisimulation.
We show this for a particular layer; the extension to the whole
network follows from a simple inductive reasoning.

\begin{lemma}
  \label{lem:cons}
Let $\P$ be a bisimulation on $\N$.
If $\v_1 \in \val{\S_{i-1}}$ is $\P$-consistent, then $\v_2 =
\seml{\N}{i}(\v_1)$ is $\P$-consistent.
\end{lemma}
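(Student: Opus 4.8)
The plan is to show directly that $\v_2 = \seml{\N}{i}(\v_1)$ assigns the same value to any two nodes $\s'_1, \s'_2 \in \S_i$ that lie in the same region of $\P_i$. Fix such a pair with $\s'_1 \P_i \s'_2$. By the definition of the layer semantics, $\v_2(\s'_k) = \A{i}(\s'_k)\big(\sum_{\s \in \S_{i-1}} \W{i}(\s, \s'_k)\v_1(\s) + \b{i}(\s'_k)\big)$ for $k \in \{1,2\}$. Conditions (1) and (2) of $\nn$-bisimulation immediately give $\A{i}(\s'_1) = \A{i}(\s'_2)$ and $\b{i}(\s'_1) = \b{i}(\s'_2)$, so it suffices to show that the two weighted sums $\sum_{\s \in \S_{i-1}} \W{i}(\s, \s'_k)\v_1(\s)$ agree for $k = 1$ and $k = 2$.

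The key idea is to regroup the sum over all nodes of layer $i-1$ according to the partition $\P_{i-1}$. Since $\P_{i-1}$ partitions $\S_{i-1}$, I would write
\[
\sum_{\s \in \S_{i-1}} \W{i}(\s, \s'_k)\v_1(\s) = \sum_{\S \in \P_{i-1}} \sum_{\s \in \S} \W{i}(\s, \s'_k)\v_1(\s).
\]
Now I exploit $\P$-consistency of $\v_1$: within a fixed region $\S \in \P_{i-1}$, all nodes $\s \in \S$ share a common value, say $c_{\S} = \v_1(\s)$. This lets me pull $\v_1(\s)$ out of the inner sum, turning $\sum_{\s \in \S} \W{i}(\s, \s'_k)\v_1(\s)$ into $c_{\S} \sum_{\s \in \S} \W{i}(\s, \s'_k) = c_{\S}\,\ps{i}{\N}(\S, \s'_k)$, which is exactly the pre-sum. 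Condition (3) of the bisimulation then gives $\ps{i}{\N}(\S, \s'_1) = \ps{i}{\N}(\S, \s'_2)$ for every region $\S \in \P_{i-1}$, so the two regrouped sums coincide term by term.

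Putting these together, the arguments of the activation functions agree for $\s'_1$ and $\s'_2$, and since the activation functions themselves agree, $\v_2(\s'_1) = \v_2(\s'_2)$, establishing $\P$-consistency of $\v_2$.

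I do not expect any serious obstacle here; the proof is a short calculation once the summation is regrouped by $\P_{i-1}$. The one point requiring care is the order in which the three bisimulation conditions are invoked: consistency of $\v_1$ must be applied \emph{before} the pre-sum condition, since it is precisely consistency that converts the weighted valuation sum into a scalar multiple of a pre-sum, at which point condition (3) applies. Making the regrouping and the factoring of $c_{\S}$ explicit is the crux of the argument.
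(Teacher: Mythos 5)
Your proof is correct and follows essentially the same route as the paper's: regroup the weighted sum over $\S_{i-1}$ by the regions of $\P_{i-1}$, use $\P$-consistency of $\v_1$ to factor out the common value on each region so the inner sum becomes a pre-sum, and then invoke the three bisimulation conditions to conclude $\v_2(\s'_1)=\v_2(\s'_2)$. The only difference is cosmetic (you argue the arguments of the activation agree rather than expanding back to the expression for the second node).
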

\begin{proof}
Let $\s', \s'' \in \S_i$ such that $\s' \P_i \s''$.
We need to show that $\v_2(\s') = \v_2(\s'')$.
$\v_2(\s') = \A{i}(\s')(\sum_{\s \in \S_{i-1}} \W{i}(\s, \s') \v_1(\s) +
\b{i}(\s')) =\A{i}(\s') (\sum_{\S \in \P_{i-1}}\sum_{\s \in \S}
\W{i}(\s, \s')$ $ \v_1(\s) + \b{i}(\s'))$.
Since, $\v_1$ is $\P$-consistent, for each $\S$, we have a value
$\v_1^\S$ that all elements of $\S$ are mapped to by $\v_1$, that is, $\v_1^\S =
\v_1(\s)$ for all $\s \in \S$.
Replacing $\v_1(\s)$ for each
$\s$, by $\v_1^\S$, we obtain $\v_2(\s') = \A{i}(\s') (\sum_{\S \in \P_{i-1}}\sum_{\s \in \S}
\W{i}(\s, \s') \v_1^\S + \b{i}(\s'))$.
From the definition of pre-sum, we can replace $\sum_{\s \in \S}
\W{i}(\s, \s')$ by $\ps{i}{\N}(\S, \s')$, which is also equal to
$\ps{i}{\N}(\S,\s'')$ from the definition of bisimulation, since $\P$
is a bisimulation and $\s' \P_i \s''$.
Also, $\A{i}(\s') = \A{i}(\s'')$ and $\b{i}(\s') = \b{i}(\s'')$.
So, we
obtain,
$\v_2(\s') = \A{i}(\s'') (\sum_{\S \in P_{i-1}} \ps{i}{\N}(\S,\s'')
\v_1^\S + \b{i}(\s''))$.
Expanding back $\ps{i}{\N}(\S,\s'')$, and $\v_1^\S = \v_1(\s)$  for
all $\s \in \S$, we obtain
$\v_2(\s') = \A{i}(\s'') (\sum_{\S \in P_{i-1}} \sum_{\s \in \S}
\W{i}(\s, \s'') \v_1(\s) + \b{i}(\s'')) = \v_2(\s'')$.
\end{proof}

Note that if we do not group together the nodes in the input and
output layers, there is a bijection between $\S_0$ and $\Sh_0$ and
$\S_k$ and $\Sh_k$, and hence, a bijection between their valuations.
We will show that both $\N$ and $\abs{\N}{\P}$ have the ``same''
input-output relation modulo the bijection between their nodes.
First, we define a formal relation between $\P$-consistent valuations
of $\N$ and valuations of $\abs{\N}{\P}$.

\begin{definition}
Let $\P$ be a bisimulation on $\N$, and $\v \in \val{\S_i}$ be a
$\P$-consistent valuation. 
The abstraction of $\v$, denoted, 
$\alp{\v}_{\N, \P} \in \val{\Sh_i}$, is defined as, for every
$\hat{\s} \in \Sh_i$,  $\alp{\v}_{\N,
  \P}(\hat{\s}) = \v(\s)$ for some $\s \in \hat{\s}$.
\end{definition}
Note that $\alp{\v}_{\N, \P}$ is well defined, since, from the
$\P$-consistency of $\v$, $\v(\s)$ is the same for any choice of $\s
\in \hat{\s}$.
When $\N$ and $\P$ are clear from the context, we will drop the
subscript and write $\alp{\v}_{\N, \P}$ as just $\alp{\v}$.
The next result states that the output of the $i$-th layer of
$\abs{\N}{\P}$ with the abstraction of a $\P$-consistent valuation
$\v$ of the $i-1$-st layer of $\N$ as input,
results in a valuation that is the abstraction of the output of the
$i$-th layer of $\N$ on input $\v$.
In other words, it says that propagating a valuation for one-step in
$\N$ is the same as propagating its abstraction in $\abs{\N}{\P}$.

\begin{lemma}
  \label{lem:abs}
Let $\P$ be a bisimulation on $\N$, and $\v \in \val{\S_i}$ be
$\P$-consistent.
Then, $\alp{\seml{\N}{i}(\v)} = \seml{\abs{\N}{\P}}{i}(\alp{\v})$.
\end{lemma}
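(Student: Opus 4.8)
The plan is to prove the identity pointwise: fix an arbitrary abstract node $\hat{\s}' \in \Sh_i$ and show that both sides of the claimed equation agree when evaluated at $\hat{\s}'$. By the definition of $\seml{\abs{\N}{\P}}{i}$ applied to the reduced network $\abs{\N}{\P}$, the right-hand side evaluated at $\hat{\s}'$ unfolds to
\[
\Ah{i}(\hat{\s}')\Big(\sum_{\hat{\s} \in \Sh_{i-1}} \Wh{i}(\hat{\s}, \hat{\s}')\, \alp{\v}(\hat{\s}) + \bh{i}(\hat{\s}')\Big).
\]
I would then substitute the definitions from Definition~\ref{def:red}: pick a representative $\s' \in \hat{\s}'$, so that $\Ah{i}(\hat{\s}') = \A{i}(\s')$, $\bh{i}(\hat{\s}') = \b{i}(\s')$, and for each group $\hat{\s} \in \Sh_{i-1}$ we have $\Wh{i}(\hat{\s}, \hat{\s}') = \ps{i}{\N}(\hat{\s}, \s') = \sum_{\s \in \hat{\s}} \W{i}(\s, \s')$. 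I would also rewrite $\alp{\v}(\hat{\s})$ as $\v(\s)$ for $\s \in \hat{\s}$ (legitimate by $\P$-consistency of $\v$, exactly as in the definition of $\alpha$).

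Next I would compute the left-hand side. Since $\v_2 = \seml{\N}{i}(\v)$ is $\P$-consistent by Lemma~\ref{lem:cons}, the abstraction $\alp{\v_2}$ is well defined, and evaluating at $\hat{\s}'$ gives $\alp{\v_2}(\hat{\s}') = \v_2(\s')$ for the same representative $\s' \in \hat{\s}'$. Expanding $\v_2(\s')$ by the definition of $\seml{\N}{i}$ yields
\[
\A{i}(\s')\Big(\sum_{\s \in \S_{i-1}} \W{i}(\s, \s')\,\v(\s) + \b{i}(\s')\Big).
\]
The core of the argument is to reconcile the two expressions, and the key move is to regroup the single sum over $\S_{i-1}$ in the left-hand side as a double sum over the groups of $\P_{i-1}$: since $\P_{i-1}$ partitions $\S_{i-1}$, we have $\sum_{\s \in \S_{i-1}} = \sum_{\hat{\s} \in \P_{i-1}} \sum_{\s \in \hat{\s}}$. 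Using $\P$-consistency of $\v$ to pull out the common value $\v(\s)$ on each group $\hat{\s}$, the inner sum collapses to $\big(\sum_{\s \in \hat{\s}} \W{i}(\s, \s')\big)\v(\s) = \ps{i}{\N}(\hat{\s}, \s')\,\alp{\v}(\hat{\s}) = \Wh{i}(\hat{\s}, \hat{\s}')\,\alp{\v}(\hat{\s})$, which is precisely the summand appearing in the right-hand side.

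I anticipate no genuine obstacle here, as this lemma is essentially a bookkeeping restatement of the computation already carried out inside the proof of Lemma~\ref{lem:cons}; that proof established exactly the regrouping and substitution steps I need. The only point requiring mild care is ensuring the representative $\s'$ chosen for evaluating the output layer is used consistently across the definitions of $\Ah{i}$, $\bh{i}$, $\Wh{i}$, and the abstraction $\alp{\v_2}$ — but this is unambiguous precisely because $\P$ being a bisimulation forces the biases, activation functions, and pre-sums to be independent of the choice of representative, as noted immediately after Definition~\ref{def:red}. Once both sides are reduced to the common form $\A{i}(\s')\big(\sum_{\hat{\s} \in \P_{i-1}} \ps{i}{\N}(\hat{\s}, \s')\,\alp{\v}(\hat{\s}) + \b{i}(\s')\big)$, the equality is immediate, and since $\hat{\s}'$ was arbitrary the two valuations agree everywhere.
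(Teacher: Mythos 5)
Your proposal is correct and follows essentially the same route as the paper's own proof: invoke Lemma~\ref{lem:cons} to get $\P$-consistency of the output valuation, fix a representative $\s'$ of each $\hat{\s}'$, regroup the sum over $\S_{i-1}$ by the groups of $\P_{i-1}$, and use $\P$-consistency of $\v$ to collapse each inner sum into $\ps{i}{\N}(\hat{\s},\s')\,\alp{\v}(\hat{\s}) = \Wh{i}(\hat{\s},\hat{\s}')\,\alp{\v}(\hat{\s})$. The only cosmetic difference is that you expand both sides toward a common middle expression while the paper rewrites the left-hand side into the right-hand side directly.
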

\begin{proof}
From Lemma \ref{lem:cons}, we know that $\v' = \seml{\N}{i}(\v)$ is
$\P$-consistent.
Hence, for any $\hat{\s}' \in \Sh_i$, $\alp{\v'}(\hat{\s}') = \v'(\s')$ for
some (any) $\s' \in \hat{\s}'$.
Let us fix $\s' \in \hat{\s}'$.
\[\alp{\v'}(\hat{\s}') = \v'(\s') =  \A{i}(\s')\big(\sum_{\s \in \S_{i-1}} \W{i}(\s,\s')\v(\s)
  +\b{i}(s')\big)\]
from the semantics of $\N$.
Further,
\[\sum_{\s \in \S_{i-1}} \W{i}(\s,\s')\v(\s) = \sum_{\S \in \P_{i-1}}
  \sum_{\s \in \S} \W{i}(\s,\s')\v(\s)\]

From $\P$-consistency of $\v$, $\v(\s) = \alp{\v}(\S)$ for any $\s \in
\S$. Hence,
\[\sum_{\s \in \S} \W{i}(\s,\s')\v(\s) = \sum_{\s \in \S}
  \W{i}(\s,\s')\alp{\v}(\S)
  = \ps{i}{\N}(\S, \s') \alp{\v}(\S).\]
From the definition of $\abs{\N}{\P}$,
$\Ah{i}(\hat{\s}') = \A{i}(\s'), \Wh{i}(\S,
\hat{\s}') = \ps{i}{\N}(\S, \s'), \bh{i}(\hat{\s}') =
\b{i}(\s'),$
and $ \P_{i-1} = \Sh_{i-1}$.
From $\P$-consistency of $\v$, $\alp{\v}(\S) =
\v(\s)$ for any $\s \in \S$.
Therefore, for any $\hat{\s}' \in \Sh_i$, 
\[\alp{\seml{\N}{i}(\v)}(\hat{\s}') = 
  \alp{\v'}(\hat{\s}')\]\[  = \Ah{i}(\hat{\s}')\big(\sum_{\S \in \Sh_{i-1}} \Wh{i}(\S,
  \hat{\s}') \alp{\v}(\S) +\bh{i}(\hat{\s}')\big) =
\seml{\abs{\N}{\P}}{i}(\alp{\v}) (\hat{\s}').\]
% Hence, for any $\hat{\s}'$, $\alp{\seml{\N}{i}(\v)}(\hat{\s}') = \alp{\v'}(\hat{\s}') = \v'(\s') = \seml{\abs{\N}{\P}}{i}(\alp{\v})(\hat{\s}')$.
% Next, we show $\hat{v}_2 = \sem{\abs{\N}{\P}}_i(\hat{v}_1)$.
% (Property 2) Since $v_2$ is consistent with $\P$, we have $\hat{v}_2(\hat{s}_2) = v_2(s_2)$ for some/any $s_2 \in \hat{s}_2$.
% $\sem{\abs{\N}{\P}}_i(\hat{v}_1)(\hat{s}_2) = \Ah{i}(\sum_{\hat{s}_1
%   \in \hat{S}_i} \Wh{i}(\hat{s}_1, \hat{s}_2) + \bh{i+1}(\hat{s}_2)
% )$.
% From the definition of reduced system, we have $\Wh{i}(\hat{s}_1,
% \hat{s}_2) = \ps{i}{\N}(s_2, \hat{s}_1)$, $\bh{i+1}(\hat{s}_2) =
% b_i(s_2)$, and $\Ah{i+1}(\hat{s}_2) = \A{i}(s_2)$, for some/any $s_2
% \in \hat{s}_2$, and $\hat{S}_i = \P_i$.
% So, $\sem{\abs{\N}{\P}}_i(\hat{v}_1)(\hat{s}_2) =  \A{i}(\sum_{\hat{s}_1
%   \in \P_i} \ps{i}{\N}(s_2, \hat{s}_1) + b_{i+1}(s_2))$, which is
% equal to $v_2(s_2)$ from Property 1, which is equal to
% $\hat{v}_2(\hat{s}_2)$ from Property 2.
\end{proof}

The following theorem follows by composing the results from Lemma
\ref{lem:abs} for the different layers.
\begin{theorem}
Given $\P$ a bisimulation on $\N$, and $\v \in \val{\S_0}$ that is
$\P$-consistent, we have $\alp{\sem{\N}(\v)} =
\sem{\abs{\N}{\P}}(\alp{\v})$.
\end{theorem}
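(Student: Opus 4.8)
The plan is to prove the theorem by induction on the layer index, leveraging Lemma~\ref{lem:abs} as the one-step building block and Lemma~\ref{lem:cons} to maintain $\P$-consistency through the induction. The key observation is that $\sem{\N} = \semc{\N}{k}$ and $\sem{\abs{\N}{\P}} = \semc{\abs{\N}{\P}}{k}$ by definition, and each of these is a composition of the one-layer semantic functions. So the theorem is essentially a statement that abstraction commutes with composition, which should follow by iterating the single-layer commutation result.

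Concretely, I would prove the stronger inductive claim: for every $i \in (k]$ and every $\P$-consistent $\v \in \val{\S_0}$, we have $\alp{\semc{\N}{i}(\v)} = \semc{\abs{\N}{\P}}{i}(\alp{\v})$. The base case $i = 1$ is exactly Lemma~\ref{lem:abs} applied at layer $1$, since $\semc{\N}{1} = \seml{\N}{1}$ and $\semc{\abs{\N}{\P}}{1} = \seml{\abs{\N}{\P}}{1}$. For the inductive step, assume the claim holds for $i-1$. I would write $\semc{\N}{i}(\v) = \seml{\N}{i}(\semc{\N}{i-1}(\v))$ using the definition of $\semc{\N}{i}$. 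Let $\w = \semc{\N}{i-1}(\v)$, a valuation of layer $i-1$. Applying the one-step Lemma~\ref{lem:abs} to $\w$ at layer $i$ gives $\alp{\seml{\N}{i}(\w)} = \seml{\abs{\N}{\P}}{i}(\alp{\w})$, and then the induction hypothesis lets me rewrite $\alp{\w} = \alp{\semc{\N}{i-1}(\v)} = \semc{\abs{\N}{\P}}{i-1}(\alp{\v})$. Chaining these and recognizing $\seml{\abs{\N}{\P}}{i} \comp \semc{\abs{\N}{\P}}{i-1} = \semc{\abs{\N}{\P}}{i}$ yields the claim at $i$. Taking $i = k$ gives the theorem.

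The one subtlety I need to handle carefully is that Lemma~\ref{lem:abs} has a $\P$-consistency hypothesis on its input valuation, so before invoking it at layer $i$ I must know that $\w = \semc{\N}{i-1}(\v)$ is $\P$-consistent. This is supplied by Lemma~\ref{lem:cons}: starting from the $\P$-consistency of $\v$ and applying Lemma~\ref{lem:cons} layer by layer (itself a trivial induction that the paper already alludes to), each intermediate valuation $\semc{\N}{j}(\v)$ remains $\P$-consistent. I would either fold this into the inductive invariant—strengthening the claim to assert both $\P$-consistency of $\semc{\N}{i}(\v)$ and the commutation identity—or simply cite Lemma~\ref{lem:cons} as establishing consistency of all intermediate valuations at the outset.

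I do not expect any genuine obstacle here; the heavy lifting is entirely contained in Lemma~\ref{lem:abs} (the single-layer commutation) and Lemma~\ref{lem:cons} (consistency preservation), and the theorem is a routine compositional closure of these. The only thing to be careful about is bookkeeping the two consistency conditions in lockstep through the induction so that each application of Lemma~\ref{lem:abs} is legitimate.
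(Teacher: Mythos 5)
Your proof is correct and follows exactly the paper's route: the paper's own proof is the one-line statement that one shows by induction on $i$ that $\alp{\semc{\N}{i}(\v)} = \semc{\abs{\N}{\P}}{i}(\alp{\v})$, with Lemma~\ref{lem:abs} as the inductive step and Lemma~\ref{lem:cons} guaranteeing the consistency hypothesis at each layer. You have simply spelled out the details the paper leaves implicit, including the correct bookkeeping of the $\P$-consistency invariant.
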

\begin{proof}
We can show by induction on $i$ that $\alp{\semc{\N}{i}(\v)} =
\semc{\abs{\N}{\P}}{i}(\alp{\v})$.
  \end{proof}

\begin{figure}
  \begin{center}
    \includegraphics[scale=0.4]{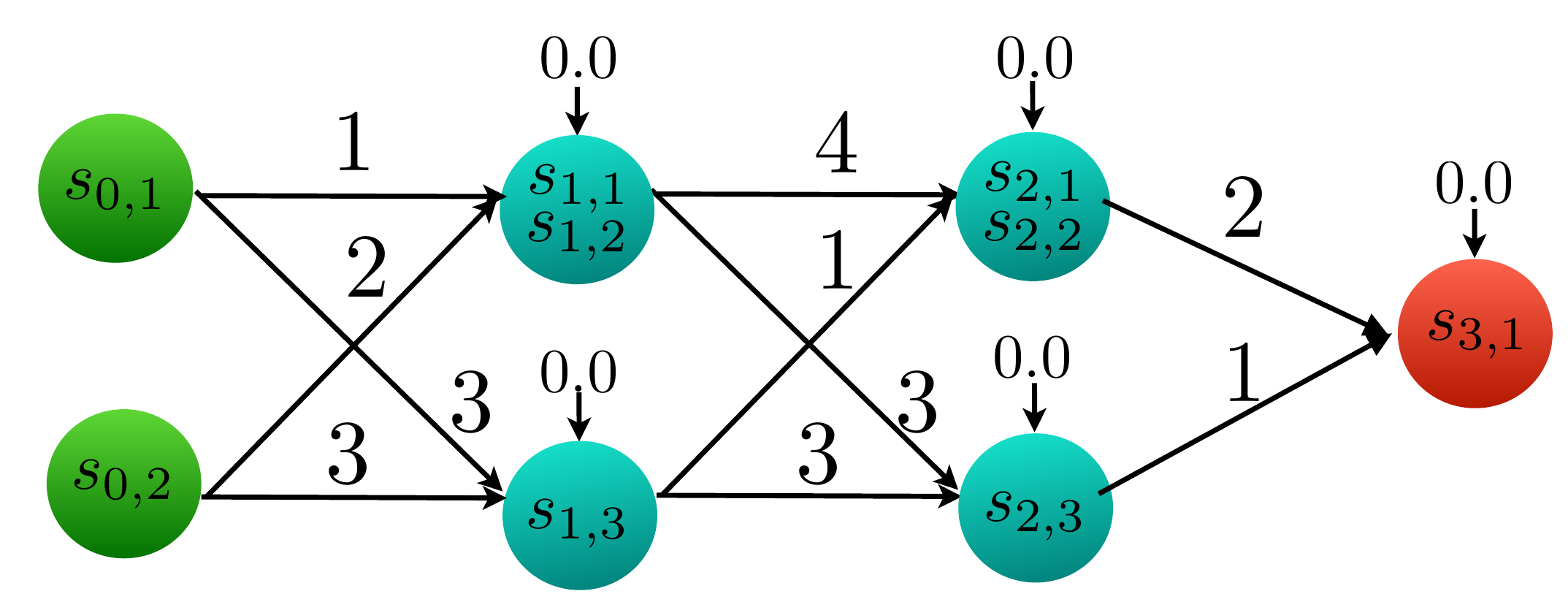}
    \caption{Reduced System $\abs{\N}{\P}$}
    \label{fig:mini}
    \end{center}
\end{figure}

\begin{example}
Consider a partition $\P$ for the $\nn$ $\N$ in Figure \ref{fig:bisim} where
each node appears as a region by itself except for the regions
$\S_1 = \{\s_{1,1}, \s_{1, 2}\}$, and $\S_2 = \{\s_{2,1},\s_{2,2}\}$.
We can verify that this is a bisimulation. For instance,
$\ps{2}{\N}(\S_1, \s_{2,1}) = 1 + 3$ and $\ps{2}{\N}(\S_1, \s_{2,2}) =
2 + 2$, which are the same.
The reduced system is given by the $\nn$ $\abs{\N}{\P}$ in Figure
\ref{fig:mini}.
Here, $\Wh{2}(\S_1, \S_2) = 4$.
 \end{example}
\section{$\delta$-$\nn$-bisimulation and Semantic Closeness}
$\nn$-bisimulation provides a foundation for reducing a neural network
while preserving the input-output relation.
However, existence of such bisimulations leading to equivalent reduced
networks with much fewer neurons is limited in that for many networks
no bisimulation quotient may lump together lot of nodes.
Hence, we relax the notion of bisimulation to an approximate notion
wherein we allow potentially large reductions, however, the reduced
systems may not be semantically equivalent, but only be semantically close
to the given neural network. 
We quantify the deviation of the reduced system in terms of the
``deviation'' of the approximate notion from the exact bisimulation.

The approximation notion of bisimulation we consider is inspired by
the notion of approximate bisimulation in the context of dynamical
systems~\cite{girard07,girard08}.
We essentially relax the requirement of the $\nn$-bisimulation that
the biases and pre-sums match by allowing them to be within a $\delta$.
This is formalized in the following definition.

\begin{definition}
A $\delta$-$\nn$-bisimulation for an $\nn$ $\N$ and $\delta \geq 0$ is
a partition $\P = \{\P_i\}_{i\in [\k]}$ such that  for all $i \in (k]$, $\S \in \P_{i-1}$
and $\s'_1,  \s'_2 \in \S_i$ with $\s'_1 \P_i \s'_2$, the following hold:
\begin{enumerate}
\item
$\A{i}(\s'_1) = \A{i}(\s'_2)$,
\item
$\dist{\b{i}(\s'_1)}{\b{i}(\s'_2)} \leq \delta$, and
\item
$\dist{\ps{i}{\N}(\S, \s'_1)}{\ps{i}{\N}(\S, \s'_2)} \leq \delta$.
 \end{enumerate}
\end{definition}

We will also use $\delta$-bisimulation to refer to $\delta$-$\nn$-bisimulation. 
The reduced system can be constructed similar to that for
$\nn$-bisimulation.
However, the choice of the nodes $\s' \in \hat{\s}'$  used to construct the weights and
biases of the reduced system could lead to different neural networks.
Hence, we obtain a finite set of possibilities for the reduced system
that we denote by $\absa{\N}{\P}{\delta}$.
\begin{figure}
  \begin{center}
    \includegraphics[scale=0.3]{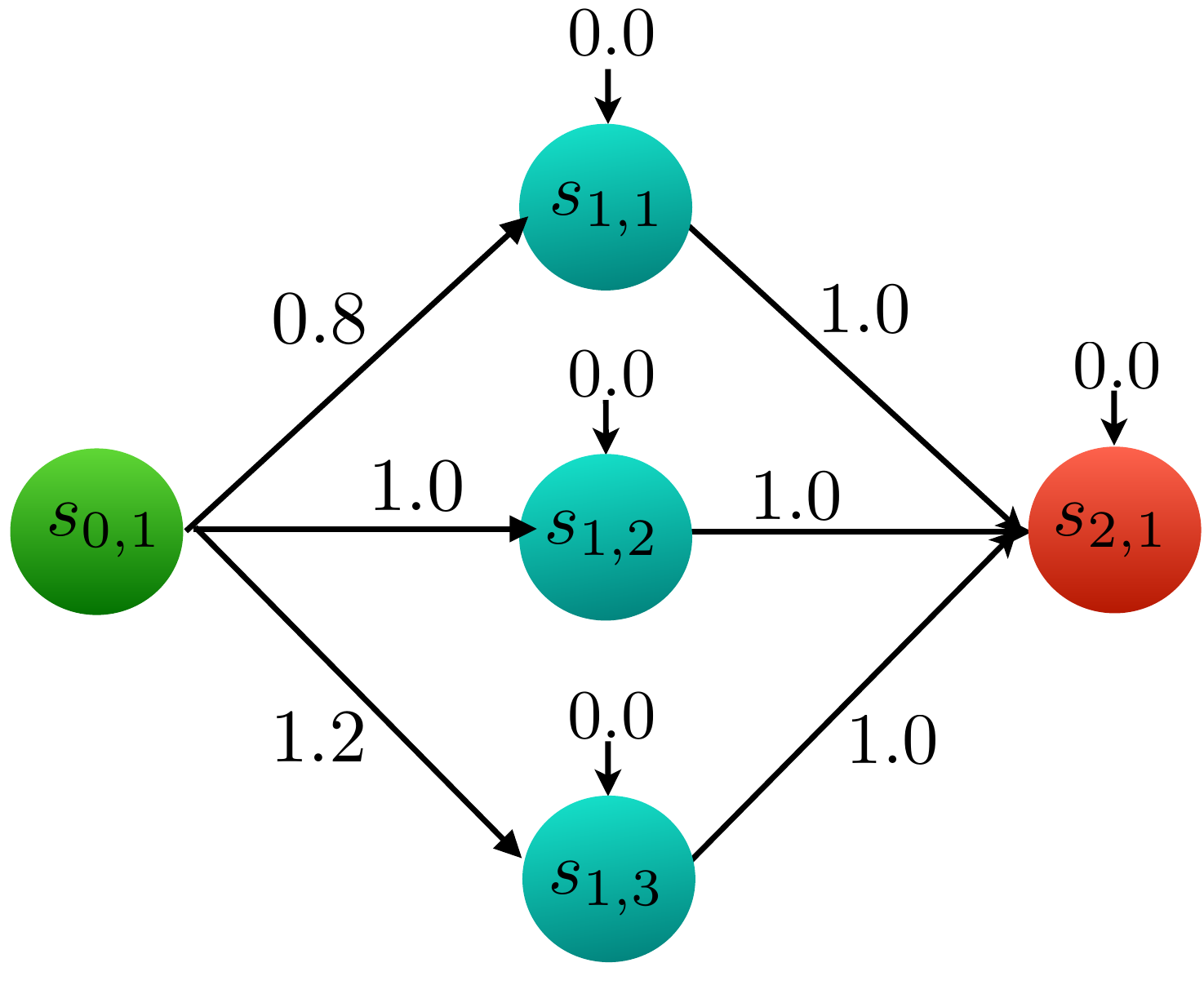}
    \includegraphics[scale=0.3]{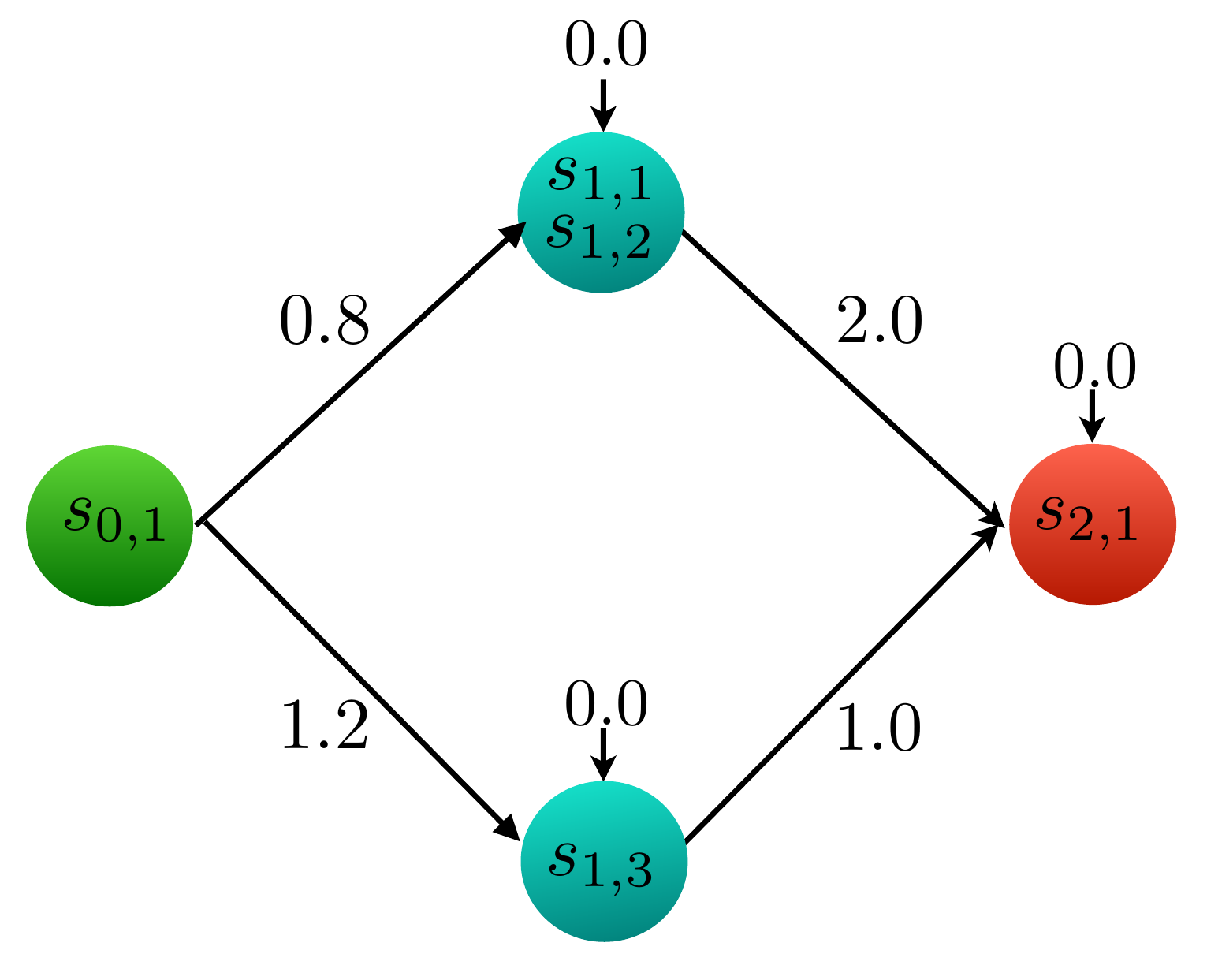}
        \includegraphics[scale=0.3]{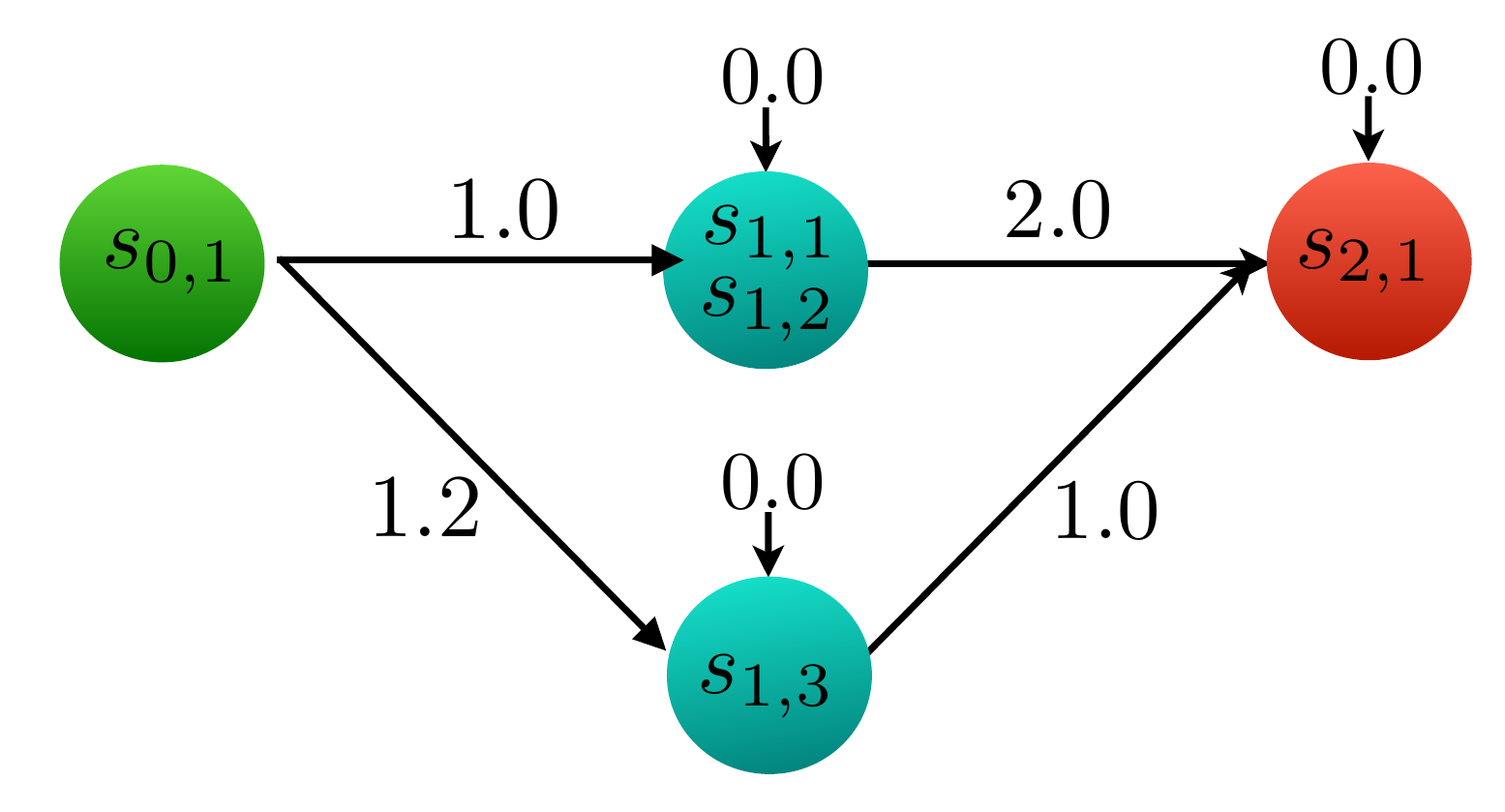}
    \caption{Illustration of $\absa{\Ns}{\P}{\delta}$ on $\nn$ $\Ns$}
\label{fig:approx}
    \end{center}
\end{figure}

\begin{example}
Consider the $\nn$ $\Ns$ in Figure \ref{fig:approx} (top left) and a partition
$\P = \{\P_i\}_i$, where $\P_0 = \{ \{ \s_{0,1}\}\}$, $\P_2 = \{ \{
\s_{2,1}\}\}$ and $\P_1 = \{ \{ \s_{1,1}, \s_{1,2} \}, \{
\s_{1,3}\}\}$, that is, $\P$ merges nodes $\s_{1,1}$ and $\s_{1,2}$.
Note that $\P$ is a $\delta$-bisimulation on $\Ns$ for $\delta = 0.2$.
For instance, $\ps{1}{\Ns}(\{ \s_{0,1}\},  \s_{1,1}) = 0.8$ and
$\ps{1}{\Ns}(\{ \s_{0,1}\},  \s_{1,2}) = 1.0$ whose difference is
$\leq 0.2 = \delta$.
$\absa{\N}{\P}{\delta}$ consists of $\Ns_1$ and $\Ns_2$ in Figure
\ref{fig:approx} (top right and bottom), one which is obtained by choosing
the pre-sum corresponding to $\s_{1,1}$ and other by choosing the
pre-sum corresponding to $\s_{1,2}$.
  \end{example}

Our objective is to give a bound on the deviation of the semantics of any $\N' \in
\absa{\N}{\P}{\delta}$ from that of $\N$.
We start by quantifying this deviation in one step of computation.
For that, we extend the notion of consistent valuations to an approximate
notion, wherein we require the valuations of related states to be
within a bound rather than match exactly.

\begin{definition}
A valuation $\v \in \val{\S_i}$ is $\epsilon, \P$-consistent, if for
all $\s_1, \s_2 \in \S_i$ with $\s_1 \P_i \s_2$, 
$\dist{\v(\s_1)}{\v(\s_2)} \leq \epsilon$.
\end{definition}

Our next step is to establish a relation between the valuation
propagation in $\N$ and any $\N' \in \absa{\N}{\P}{\delta}$ analogous
to Lemma \ref{lem:abs}.
First, we will need to relax the notion of the abstraction of a
valuation, however, unlike in the previous case, we obtain a set of
abstractions $\alpa{\v}{\epsilon}$.

\begin{definition}
Let $\P$ be a partition of $\N$, and $\v \in \val{\S_i}$.
The $\epsilon$-abstraction of $\v$, denoted, 
$\alpa{\v}{\epsilon}_{\N, \P}$, consists of $\hat{\v} \in 
\val{\P_i}$ such that  for all $\hat{\s} \in \P_i, \s \in \hat{\s}$, $\dist{\hat{\v}(\hat{\s})}{\v(\s)} \leq
\epsilon$.
\end{definition}
When $\N$ and $\P$ are clear from the context, we will drop the
subscript and write $\alpa{\v}{\epsilon}_{\N, \P}$ as just
$\alpa{\v}{\epsilon}$.
The next result states that the $\epsilon$-abstraction for any
$\epsilon,\P$-consistent valuation is non-empty.
\begin{proposition}
  Let $\v \in \val{\S_i}$ be an $\epsilon, \P$-consistent valuation.
  Then $\alpa{\v}{\epsilon}$ is non-empty.
\end{proposition}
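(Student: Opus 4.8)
The plan is to construct an explicit element of $\alpa{\v}{\epsilon}$ by choosing, within each group, a representative node and copying its value. Concretely, since $\P_i$ is a partition of $\S_i$, every region $\hat{\s} \in \P_i$ is non-empty (as is standard for a partition), so I can fix an arbitrary representative $\s_{\hat{\s}} \in \hat{\s}$ for each $\hat{\s} \in \P_i$. I then define a candidate valuation $\hat{\v} \in \val{\P_i}$ by $\hat{\v}(\hat{\s}) = \v(\s_{\hat{\s}})$, and the goal reduces to showing that this particular $\hat{\v}$ lies in $\alpa{\v}{\epsilon}$.

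The remaining task is to verify the defining condition of $\alpa{\v}{\epsilon}$, namely that $\dist{\hat{\v}(\hat{\s})}{\v(\s)} \leq \epsilon$ for every $\hat{\s} \in \P_i$ and every $\s \in \hat{\s}$. The key observation is that the chosen representative $\s_{\hat{\s}}$ and any $\s$ both lie in the region $\hat{\s}$, and therefore $\s_{\hat{\s}} \P_i \s$. Applying the $\epsilon, \P$-consistency of $\v$ to this pair yields $\dist{\v(\s_{\hat{\s}})}{\v(\s)} \leq \epsilon$, which is precisely $\dist{\hat{\v}(\hat{\s})}{\v(\s)} \leq \epsilon$ by the definition of $\hat{\v}$. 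Hence $\hat{\v} \in \alpa{\v}{\epsilon}$, establishing that the set is non-empty.

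I do not anticipate any genuine obstacle: the statement is an existence claim whose entire content is captured by the remark that pairwise $\epsilon$-closeness within a group forces every member's value to lie within $\epsilon$ of a single chosen representative. The only point to keep in mind is that the bound must hold uniformly over all $\s$ in a group, but this is automatic because consistency is a pairwise hypothesis and the representative is compared against each such $\s$ individually. If a symmetric choice were preferred, one could instead set $\hat{\v}(\hat{\s})$ to the midpoint $\tfrac{1}{2}\big(\max_{\s \in \hat{\s}} \v(\s) + \min_{\s \in \hat{\s}} \v(\s)\big)$, obtaining the sharper bound $\dist{\hat{\v}(\hat{\s})}{\v(\s)} \leq \epsilon/2$; since only the bound $\epsilon$ is required, however, the representative construction is the most economical route.
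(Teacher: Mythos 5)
Your proof is correct and follows exactly the paper's approach: the paper likewise defines $\hat{\v}(\hat{\s}) = \v(\s)$ for a chosen representative $\s \in \hat{\s}$ and appeals to $\epsilon,\P$-consistency to conclude membership in $\alpa{\v}{\epsilon}$. You have merely spelled out the verification step that the paper leaves implicit.
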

\begin{proof}
Note that the valuation $\hat{\v}$, given by $\hat{\v}(\hat{\s}) =
\v(\s)$ for some $\s \in \hat{\s}$ gives a valuation in
$\alpa{\v}{\epsilon}_{\N, \P}$.
\end{proof}
The converse of the above theorem also holds with a slight
modification of the error.
\begin{proposition}
  \label{prop:bound}
  Let $\v \in \val{\S_i}$, such that $\alpa{\v}{\epsilon}_{\N, \P}$ is
  non-empty.
  Then,  $\v$ is a $2\epsilon, \P$-consistent valuation.
\end{proposition}
\begin{proof}
Note that there is some $\hat{\v}$, such that $\forall \hat{\s} \in
\P_i, \s \in \hat{\s}$, $\dist{\hat{\v}(\hat{\s})}{\v(\s)} \leq
\epsilon$. Then for all $\s, \s' \in \hat{\s}$, $\dist{\v(\s)}{\v(\s')} \leq
2\epsilon$.
  \end{proof}

Now, we give a bound on the deviation of the output of the $i$-th layer of
$\absa{\N}{\P}{\delta}$ from that of $\N$ in terms of the
deviation in their inputs.
%  when given inputs
% that are with a valuation from the abstraction of a
% $\P$-consistent $\v$ as input, is in the abstraction of the output of
% the $i$-layer of $\N$ on input $\v$. 
Let $\lc{\A{i}} =  \max_{\s' \in   \S_i} \lc{\A{i}(\s')}$. 
\begin{lemma}
  \label{lem:approx}
Let $\P$ be a $\delta$-bisimulation on $\N$, and $\v \in \val{\S_{i-1}}$ be
$\epsilon, \P$-consistent.
Then, for every $\hat{\v} \in \alpa{\v}{\epsilon}$, and $\N' \in \absa{\N}{\P}{\delta}$,
\[\seml{\N'}{i}(\hat{\v}) \in \alpa{\seml{\N}{i}(\v)}{\epsilon'},\]
where $\epsilon' = \a_i \epsilon + \bc_i$, 
$\a_i = \lc{\A{i}} \norm{\S_{i-1}}\infn{\W{i}}$,  and $\bc_i = \lc{\A{i}}
(\norm{\P_{i-1}} \infn{\v} + 1) \delta$.
\end{lemma}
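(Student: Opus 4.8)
The plan is to unfold the membership claim and reduce it, node by node, to a single scalar estimate on the pre-activation inputs. By the definition of the $\epsilon'$-abstraction, proving $\seml{\N'}{i}(\hat{\v}) \in \alpa{\seml{\N}{i}(\v)}{\epsilon'}$ amounts to fixing an arbitrary class $\hat{\s}' \in \P_i$ and representative $\s' \in \hat{\s}'$ and showing $\dist{\seml{\N'}{i}(\hat{\v})(\hat{\s}')}{\seml{\N}{i}(\v)(\s')} \leq \epsilon'$. Both quantities are an activation function applied to an affine combination, and the first key observation is that these activation functions coincide: $\Ah{i}(\hat{\s}') = \A{i}(\s'_0)$ for whichever representative $\s'_0 \in \hat{\s}'$ was used to build $\N'$, and since the first clause of $\delta$-bisimulation forces all activations on $\hat{\s}'$ equal, $\Ah{i}(\hat{\s}') = \A{i}(\s')$. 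Writing $g$ for this common $\lc{\A{i}}$-Lipschitz function, it then suffices to show $\lc{\A{i}}\dist{\hat{Z}}{Z} \leq \epsilon'$, where $Z = \sum_{\s \in \S_{i-1}} \W{i}(\s,\s')\v(\s) + \b{i}(\s')$ and $\hat{Z} = \sum_{\S \in \Sh_{i-1}} \Wh{i}(\S,\hat{\s}')\hat{\v}(\S) + \bh{i}(\hat{\s}')$ are the two pre-activations.

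Next I would regroup the sum in $Z$ by the classes of $\P_{i-1}$, writing $Z = \big(\sum_{\S \in \P_{i-1}} \sum_{\s \in \S}\W{i}(\s,\s')\v(\s)\big) + \b{i}(\s')$, so that $\hat{Z}$ and $Z$ are indexed by the same classes and can be compared term by term. For each class $\S$ I would insert the pre-sum $\ps{i}{\N}(\S,\s')$ as a pivot, splitting the per-class difference as
\[\Wh{i}(\S,\hat{\s}')\hat{\v}(\S) - \sum_{\s\in\S}\W{i}(\s,\s')\v(\s) = \big(\Wh{i}(\S,\hat{\s}') - \ps{i}{\N}(\S,\s')\big)\hat{\v}(\S) + \sum_{\s\in\S}\W{i}(\s,\s')\big(\hat{\v}(\S) - \v(\s)\big),\]
using $\ps{i}{\N}(\S,\s') = \sum_{\s\in\S}\W{i}(\s,\s')$ on the second summand. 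The two pieces isolate the two sources of error exactly: the first is controlled by the pre-sum slack of the $\delta$-bisimulation, since $\Wh{i}(\S,\hat{\s}') = \ps{i}{\N}(\S,\s'_w)$ for some $\s'_w \in \hat{\s}'$ gives $\dist{\Wh{i}(\S,\hat{\s}')}{\ps{i}{\N}(\S,\s')} \leq \delta$, multiplied by $|\hat{\v}(\S)| \leq \infn{\v}$; the second is controlled by $\epsilon,\P$-consistency through $\hat{\v} \in \alpa{\v}{\epsilon}$, giving $\dist{\hat{\v}(\S)}{\v(\s)} \leq \epsilon$, multiplied by $\infn{\W{i}}$. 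The bias term contributes a further $\dist{\bh{i}(\hat{\s}')}{\b{i}(\s')} \leq \delta$.

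Finally I would collect the constants by counting. Summing the valuation-mismatch piece over all classes and nodes gives $\sum_{\S \in \P_{i-1}}\norm{\S}\,\infn{\W{i}}\epsilon = \norm{\S_{i-1}}\infn{\W{i}}\epsilon$; summing the pre-sum-slack piece over the $\norm{\P_{i-1}}$ classes gives $\norm{\P_{i-1}}\infn{\v}\delta$; adding the single bias term $\delta$ and multiplying through by $\lc{\A{i}}$ yields exactly $\epsilon' = \a_i\epsilon + \bc_i$. The estimates are routine triangle-inequality bookkeeping; the only genuinely delicate points, which I expect to be the main obstacle, are (i) observing that the activation functions agree exactly so a single Lipschitz constant applies, and (ii) choosing the pivot $\ps{i}{\N}(\S,\s')$ so that the $\epsilon$-error attaches to weights (counted per node, hence $\norm{\S_{i-1}}$) while the $\delta$-error attaches to valuations (counted per class, hence $\norm{\P_{i-1}}$), together with bounding $|\hat{\v}(\S)|$ by $\infn{\v}$ up to the harmless $\epsilon$ slack inherent in the abstraction.
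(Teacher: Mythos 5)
Your proposal is correct and follows essentially the same route as the paper: fix a class representative, note the activation functions agree so the problem reduces to bounding the pre-activation difference, pivot on $\ps{i}{\N}(\S,\s')$ to separate the $\delta$ pre-sum/bias slack (counted per class, giving $\norm{\P_{i-1}}\infn{\v}\delta + \delta$) from the $\epsilon$ valuation mismatch (counted per node, giving $\norm{\S_{i-1}}\infn{\W{i}}\epsilon$), and finish with Lipschitz continuity. The small imprecision you flag --- bounding $|\hat{\v}(\S)|$ by $\infn{\v}$ rather than $\infn{\v}+\epsilon$ --- is present in the paper's own proof as well, so your argument matches it in both substance and level of rigor.
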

\begin{proof}
Let $\v' = \seml{\N}{i}(\v)$ and $\hat{\v}' =
\seml{\N'}{i}(\hat{\v})$.
We need to show that $\hat{\v}' \in \alpa{\v'}{\epsilon'}$.
Consider any $\hat{\s}' \in \Sh_i$ and $\s' \in \hat{\s}'$.
We need to show that $\dist{\hat{\v}'(\hat{\s}')}{\v'(\s')} \leq
\epsilon'$.

Since, $\hat{\v}' = \seml{\N'}{i}(\hat{\v})$, from the semantics of
$\N'$, we have
\[\hat{\v}'(\hat{\s}') = \Ah{i}(\hat{\s}')\big(\sum_{\hat{\s} \in
    \Sh_{i-1}} \Wh{i}(\hat{\s}, \hat{\s}') \hat{\v}(\hat{\s})
  +\bh{i}(\hat{\s}')\big),\]
and from the fact that $\N' \in \absa{\N}{\P}{\delta} $, we have 
$\Wh{i}(\hat{\s}, \hat{\s}') = \ps{i}{\N}(\hat{\s},\s'_1)$ for some $\s'_1 \in
\hat{\s}'$, $\bh{i}(\hat{\s}') = \b{i}(\s'_2)$ for some $\s'_2 \in
\hat{\s}'$.
Since $\P$ is a $\delta$-bisimulation,
$\dist{\ps{i}{\N}(\hat{\s},\s'_1)}{\ps{i}{\N}(\hat{\s},\s')} \leq
\delta$, $\dist{\b{i}(\s'_2)}{\b{i}(\s')}\leq \delta$, and $\Ah{i}(\hat{\s}') = \A{i}(\s')$.
Therefore,
\[\hat{\v}'(\hat{\s}') = \A{i}(\s')\big(\sum_{\hat{\s} \in
    \P_{i-1}}(\ps{i}{\N}(\hat{\s},\s') + \delta_{\hat{\s}})
 \hat{\v}(\hat{\s})+ \b{i}(\s') + \delta_{\s'}\big),\]
 \[= \A{i}(\s')\big(\sum_{\hat{\s} \in
    \P_{i-1}}\ps{i}{\N}(\hat{\s},\s')  \hat{\v}(\hat{\s})  +
  \b{i}(\s') + \epsilon_1 + \delta_{\s'}\big),\]
where $\epsilon_1 = \sum_{\hat{\s} \in
    \P_{i-1}} \delta_{\hat{\s}} \hat{\v}(\hat{\s})$ and $\delta_{\hat{\s}}, \delta_{\s'} \in [-\delta, \delta]$.
We will examine the terms in the above expression in more detail.

\[\sum_{\hat{\s} \in
    \P_{i-1}}\ps{i}{\N}(\hat{\s},\s')  \hat{\v}(\hat{\s})
= \sum_{\hat{\s} \in
    \P_{i-1}} [\sum_{\s \in \hat{\s}} \W{i}(\s,\s')  \hat{\v}(\hat{\s})]\]

(Further, since, $\hat{\v} \in \alpa{\v}{\epsilon}$, we have for any
$\s \in \hat{\s}$, 
$\dist{\hat{\v}(\hat{\s})}{\v(\s)} \leq \epsilon$.)
\[= \sum_{\hat{\s} \in
    \P_{i-1}} [\sum_{\s \in \hat{\s}} \W{i}(\s,\s')  (\v(\s) + \epsilon_\s)]
= \sum_{\s \in \S_{i-1}} \W{i}(\s,\s')  (\v(\s) + \epsilon_\s)]\]
\[= \sum_{\s \in \S_{i-1}} \W{i}(\s,\s')  \v(\s) + \sum_{\s \in
    \S_{i-1}} \W{i}(\s,\s') \epsilon_\s\]
Plugging the above into the expression for $\hat{\v}'(\hat{\s}')$, we
obtain

\[\hat{\v}'(\hat{\s}') = \A{i}(\s')\big(\sum_{\s \in \S_{i-1}} \W{i}(\s,\s')  \v(\s) +
  \b{i}(\s') + \epsilon_1 + \epsilon_2 + \delta_{\s'}\big)\]
where $\epsilon_2 = \sum_{\hat{\s} \in \P_{i-1}} \delta_{\hat{\s}}
\hat{\v}(\hat{\s})$.
Note that the expression for $\hat{\v}'(\hat{\s}')$ looks similar to
$\v'(\s') =\A{i}(\s')\big(\sum_{\s \in \S_{i-1}} \W{i}(\s,\s')  \v(\s)
+   \b{i}(\s') )$ except for the additional error terms $\epsilon_1 +
\epsilon_2 + \delta_{\s'}$.
From the Lipschitz continuity of $\A{i}(\s')$, we obtain
\[ \dist{\hat{\v}'(\hat{\s}')}{\v'(\s')} \leq \lc{\A{i}}(\s') (\norm{\epsilon_1 +
\epsilon_2 + \delta_{\s'}})\]

% It remains to show that

% \[\lc{\A{i}}(\s') (\norm{\epsilon_1 +
%     \epsilon_2 + \delta_{\s'}}) \leq \epsilon'\]
Note that $\lc{\A{i}}(\s') \leq \lc{\A{i}}$,
$\norm{\epsilon_1} = \norm{\sum_{\s \in \S_{i-1}} \W{i}(\s,\s')
  \epsilon_\s} \leq \norm{\S_{i-1}}\infn{\W{i}}\epsilon$,
$\norm{\epsilon_1} = \norm{\sum_{\hat{\s} \in \P_{i-1}} \delta_{\hat{\s}}
  \hat{\v}(\hat{\s})} \leq \norm{\P_{i-1}} \delta \infn{\v}$, and
$\norm{\delta_{\s'}} \leq \delta$.
Hence, 
\[\dist{\hat{\v}'(\hat{\s}')}{\v'(\s')} \leq \lc{\A{i}}(\s') (\norm{\epsilon_1 +
    \epsilon_2 + \delta_{\s'}}) \leq 
\lc{\A{i}} (\norm{\S_{i-1}}\infn{\W{i}}\epsilon +\norm{\P_{i-1}}
  \delta \infn{\v} + \delta)\]
\[= \lc{\A{i}} \norm{\S_{i-1}}\infn{\W{i}}\epsilon + \lc{\A{i}}
(\norm{\P_{i-1}} \infn{\v} + 1) \delta = \epsilon'\]
\end{proof}

Lemma \ref{lem:approx} provides a bound on the error propagation in one
step.
The next theorem provides a global bound on the deviation of the
output of the reduced system with respect to that of the given neural network.
Let $\lc{\A{ }} = \max_i \lc{\A{i}}$, $\norm{\P} = \max_i
\norm{\P_{i}}$,  $\infn{\W{ }} = \max_i\infn{\W{i}}$ and $\norm{\S} =
\norm{\max_i \S_{i}}$. 
% The function $\sem{\N}$ is in fact Lipschitz continuous, when the
% activation functions are Lipschitz continuous~\cite{nips18-lipschitz}.
\begin{theorem}
\label{thm:approx}
Let $\P$ be a $\delta$-bisimulation on $\N$, and $\v \in \val{\S_0}$
be $\epsilon,\P$-consistent.
Then, for every $\hat{\v} \in
\alpa{\v}{\epsilon}$, and $\N' \in \absa{\N}{\P}{\delta}$,
\[\sem{\N'}(\hat{\v}) \in
\alpa{\sem{\N}(\v)}{\epsilon''},\] where $\epsilon'' = [(2/a)^k - 1] \bc/
(2 \a - 1)$, $\a = \lc{\A{ }} \norm{\S} \infn{\W{ }}$, and $\bc =\lc{\A{ }}
(\norm{\P} \lc{\N} \infn{\v} + 1) \delta$.
\end{theorem}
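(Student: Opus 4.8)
The plan is to iterate the one-step bound of Lemma~\ref{lem:approx} across all $k$ layers, propagating an error bound from the input to the output. Fix $\N' \in \absa{\N}{\P}{\delta}$ and $\hat{\v} \in \alpa{\v}{\epsilon}$, and set $\v_i = \semc{\N}{i}(\v)$ and $\hat{\v}_i = \semc{\N'}{i}(\hat{\v})$, so that $\v_0 = \v$, $\hat{\v}_0 = \hat{\v}$, $\v_k = \sem{\N}(\v)$, and $\hat{\v}_k = \sem{\N'}(\hat{\v})$. I would prove by induction on $i$ that $\hat{\v}_i \in \alpa{\v_i}{\epsilon_i}$ for a sequence of bounds $\epsilon_i$ determined by a recurrence, and then show $\epsilon_k \le \epsilon''$.

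The inductive step is where the two consistency notions must be reconciled, and this is the crux of the argument. Lemma~\ref{lem:approx} requires its input valuation to be $\epsilon,\P$-consistent \emph{and} the companion reduced valuation to lie in its $\epsilon$-abstraction, for the \emph{same} $\epsilon$; but the induction hypothesis supplies only the latter, namely $\hat{\v}_{i-1} \in \alpa{\v_{i-1}}{\epsilon_{i-1}}$. To recover the former I would invoke Proposition~\ref{prop:bound}: from the non-emptiness of $\alpa{\v_{i-1}}{\epsilon_{i-1}}$ it yields that $\v_{i-1}$ is $2\epsilon_{i-1},\P$-consistent. Taking the common bound $2\epsilon_{i-1}$ (so that $\hat{\v}_{i-1} \in \alpa{\v_{i-1}}{\epsilon_{i-1}} \subseteq \alpa{\v_{i-1}}{2\epsilon_{i-1}}$ as well) and applying Lemma~\ref{lem:approx} gives $\hat{\v}_i \in \alpa{\v_i}{\a_i(2\epsilon_{i-1}) + \bc_i}$, i.e. $\epsilon_i = 2\a_i\epsilon_{i-1} + \bc_i$. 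The first layer needs no doubling, since the theorem's hypotheses already provide $\v$ is $\epsilon,\P$-consistent with $\hat{\v} \in \alpa{\v}{\epsilon}$, whence $\epsilon_1 = \a_1\epsilon + \bc_1$.

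Next I would uniformize the layer-dependent constants. Replacing $\a_i$ by $\a = \lc{\A{ }}\norm{\S}\infn{\W{ }}$, and bounding $\infn{\v_{i-1}} = \infn{\semc{\N}{i-1}(\v)} \le \lc{\N}\infn{\v}$ via the global Lipschitz estimate from the preliminaries, turns the per-layer $\bc_i = \lc{\A{i}}(\norm{\P_{i-1}}\infn{\v_{i-1}} + 1)\delta$ into the uniform upper bound $\bc = \lc{\A{ }}(\norm{\P}\lc{\N}\infn{\v} + 1)\delta$; this is precisely why $\bc$ carries the factor $\lc{\N}\infn{\v}$ rather than a per-layer norm. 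The recurrence then reduces to $\epsilon_i \le 2\a\,\epsilon_{i-1} + \bc$, a standard first-order linear recurrence; starting from an exact input ($\epsilon = 0$, e.g.\ when the input layer is not grouped) its solution is the geometric sum $\epsilon_k \le \bc\sum_{j=0}^{k-1}(2\a)^j = \bc\,\frac{(2\a)^k - 1}{2\a - 1}$, which is exactly $\epsilon''$.

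The main obstacle, and the place demanding care, is the bookkeeping around the factor of $2$ introduced by Proposition~\ref{prop:bound}: it is the round-trip between ``$\hat{\v}\in\alpa{\v}{\epsilon}$'' and ``$\v$ is $\epsilon,\P$-consistent'' that forces a doubling at every interior layer and is responsible for the $(2\a)^k$ growth. One must apply the doubling exactly once per layer --- not at the initial step, where both hypotheses are supplied directly, nor beyond the final layer, where no further propagation occurs --- so that the exponent and the closed-form geometric sum come out correctly. By comparison, the uniformization of the $\bc_i$ through $\lc{\N}$ is routine. I also note that, for the closed form to match the recurrence, the exponent in the statement should read $(2\a)^k$ rather than $(2/\a)^k$.
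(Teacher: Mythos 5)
Your proof follows essentially the same route as the paper's: induct over the layers, apply Lemma~\ref{lem:approx} for the one-step bound, use Proposition~\ref{prop:bound} to recover $2\epsilon$-consistency from non-emptiness of the $\epsilon$-abstraction (the source of the per-layer doubling), uniformize the layer constants via $\lc{\N}$, and solve the recurrence $\epsilon_i = 2\a\,\epsilon_{i-1}+\bc$ to the geometric closed form. You additionally make two correct observations about the statement that the paper glosses over --- that $(2/\a)^k$ should read $(2\a)^k$, and that the closed form matches the recurrence only when the initial error is taken to be zero (the paper's own base case silently sets $\epsilon'_0=0$ despite assuming general $\epsilon,\P$-consistency of the input).
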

\begin{proof}
Let us define:
\[\v_0 = \v, \hat{\v}_0 = \hat{\v}, \epsilon_0 = \epsilon'_0 = 0\]
and for all $i \in (k]$,
\[\v_i = \seml{\N}{i}(\v), \hat{\v}_i = \seml{\N'}{i}(\hat{\v}).\]
\[\epsilon'_i = \a \epsilon_{i-1} + \bc, \epsilon_i = 2 \epsilon'_i.\]
We will show by induction on $i$ that for all $i \in [k]$, $\v_i$ is $\epsilon_i,
\P$-consistent and $\hat{\v}_i \in \alpa{\v_i}{\epsilon'_i}$.

\noindent{\bf Base case:} Base case trivially holds from the
assumptions of the theorem statement.

\noindent{\bf Induction Step:}
For $i \in (k]$, we know from Lemma \ref{lem:approx}, that if 
$\v_{i-1} \in \val{\S_{i-1}}$ is $\epsilon_{i-1}, \P$-consistent and $\hat{\v}_{i-1}
\in \alpa{\v_{i-1}}{\epsilon_{i-1}}$, then $\hat{\v}_i =
\seml{\N'}{i}(\hat{\v}_{i-1}) \in
\alpa{\seml{\N}{i}(\v_{i-1})}{\epsilon'} =
\alpa{\v_i}{\epsilon'}$, where $\epsilon' = \a_i \epsilon_{i-1} +
\bc_i$.
\[\a_i = \lc{\A{i}} \norm{\S_{i-1}}\infn{\W{i}} \leq \lc{\A{ }}
  \norm{\S}\infn{\W{ }} = \a,\]
\[\bc_i = \lc{\A{i}}
(\norm{\P_{i-1}} \infn{\v_i} + 1) \delta \leq \lc{\A{ }}
(\norm{\P} \lc{\N}\infn{\v} + 1) \delta = \bc\]
Hence, $\epsilon \leq \epsilon'_i$ and  $\hat{\v}_i \in \alpa{\v_i}{\epsilon'_i}$.
Further from Proposition \ref{prop:bound}, we obtain  $\v_i$ is
$\epsilon_i, \P$-consistent.

We will show that $\epsilon'_k = \epsilon''$.
Unrolling the recursive equation, we obtain $\epsilon'_i = 2 \a
\epsilon'_{i-1} + \bc = (2 \a)^i \epsilon'_0 + [(2 \a)^{i-1} + \cdots +
1] \bc = [(2/a)^i - 1] \bc/ (2 \a - 1)$.
Hence,
\[\epsilon'_k = [(2/a)^k - 1] \bc/ (2 \a - 1) = \epsilon''\]
We finish the proof by noting that $\sem{\N'}(\hat{\v}) = \hat{\v}_k \in
\alpa{\v_k}{\epsilon'_k} = \alpa{\sem{\N}(\v)}{\epsilon''}$.
\end{proof}

\begin{remark}
Note that for $\delta = 0$, all the notions and results reduce to that
of $\nn$-bisimulation.
\end{remark}

\section{Minimization algorithm}
In this section, we show that there is a coarsest $\nn$-bisimulation
for a given $\nn$, that encompasses all other bisimulations.
This implies that the induced reduced network with respect to this
coarsest bisimulation is the smallest $\nn$-bisimulation equivalent network.
We will provide an algorithm that outputs the coarsest
$\nn$-bisimulation.

We note that a coarsest $\delta$-$\nn$-bisimulation
may not exist in general.
For instance, consider the $\nn$ $\Ns$ from Figure \ref{fig:approx},
along with the $0.2$-bisimulation $\P$ that induces the reduced
systems $\Ns_1$ and $\Ns_2$.
There is another $0.2$-bisimulation $\P'$ which is obtained by merging
$\s_{1,2}$ and $\s_{1,3}$ instead of $\s_{1,1}$ and $\s_{1,2}$ as in
$\P$.
Note that the reduced networks in $\absa{\Ns}{\P}{0.2}$ and
$\absa{\Ns}{\P'}{0.2}$ have the same size.
However, there is no $0.2$-bisimulation that is coarser than both $\P$
and $\P'$, since, that would require merging $\s_{1,1}, \s_{1,2}$ and
$\s_{1,3}$, which would violate the $0.2$ bound on the difference between
the pre-sums of $\s_{1,1}$ and $\s_{1, 3}$.
% For the $\delta$-$\nn$ bisimulation there is no unique minimal
% network as can be see from Figure \ref{fig:approx}.

% However, we show that a unique minimal reduced $\nn$ exists for
% $\nn$-bisimulation.
% We will provide an algorithm to compute the minimal reduction and show
% that the partition returned ``contains'' any other bisimulation.

The broad algorithm for minimization consists of starting with a partition in which all
the nodes in a layer are merged together and then splitting them such
that the regions in the partition respect the activation functions,
biases and the pre-sums. 
We use the function $\sb{\S}$ in the algorithm that splits a set of nodes $\S$ into
maximal groups such that the elements in each group agree on the
activation functions and the biases. 
More precisely, $\sb{\S}$ takes $\S \subseteq \P_i$ as input and
returns a partition $\P_\S$ such that for all $\s_1, \s_2 \in \S$,
$\s_1 \P_\S \s_2$ if and only if $\A{i}(\s_1) = \A{i}(\s_2)$ and
$\b{i}(\s_1) = \b{i}(\s_2)$. 
Further, we split those regions that have nodes with inconsistent pre-sums.
Next, we define inconsistent pairs of regions with respect to pre-sums
and the corresponding splitting operations. 

\begin{definition}
Given a partition $\P = \{\P_i\}_i$ of $\nn$ $\N$, a region $\S' \in
\P_i$ is inconsistent in $\N$ with respect to $\S \in 
\P_{i-1}$, written $(\S', \S)$ inconsistent, if there exist $\s'_1,
\s'_2 \in \S'$, such that $\ps{i}{\N}(\S, \s'_1) \not= \ps{i}{\N}(\S,
\s'_2)$. 
  \end{definition}

The algorithm searches for inconsistent pairs $(\S', \S)$ and splits
$\S'$ into maximal groups such that all nodes in a group have the same
pre-sum with respect to $\S$.
More precisely, $\sp{\S'}{\S}$ takes $\S'$ and $\S$ as input and
returns a partition $\P'$ of $\S'$ such that $\ps{i}{\N}(\S, \s'_1) =
\ps{i}{\N}(\S, \s'_1)$ if and only if $\s'_1 \P' \s'_2$.
  
\begin{algorithm}[h]
\vspace{2 mm}
\KwIn{A $\nn$ $\N$}
%\vspace{2 mm}
\KwOut{Coarsest Bisimulation $\P$, and Minimized $\nn$ $\abs{\N}{\P}$}
\vspace{2 mm}
\caption{\textbf{MinNN:} Minimization Algorithm}
\label{algo:mini}
\SetKw{KwGoTo}{goto}
\Begin{
  $\P  = \{\S_0\}$\\
 \For{$i \in (k]$}
  {
    $\P = \P \cup \sb{\S_i}$
  }
  \While{Exists $\S, \S' \in \P$, such that $(\S, \S')$ inconsistent}
  {
    $\P = \P \backslash \{\S'\} \cup \sp{\S'}{\S}$
    }
% \For{ each state }{
% 		\textbf{if} \textbf{then} 
% 				 \textbf{else} 
% }
% d=0 \\
% \While{true}{
% 	d =d+1 \\
% 	\For{each edge}{
%  				\textbf{if} this \textbf{then}
% 				this
%  				\textbf{else}
% 					this	
 				
% 	}
% }
% final
\textbf{return} Return $\P$ and $\abs{\N}{\P}$
}
\end{algorithm}

Next, we show that Algorithm \ref{algo:mini} returns the coarsest
bisimulation, and hence, the reduced network is the smallest
bisimulation equivalent network.

\begin{definition}
A partition $\P$ of $\N$ is the coarsest bisimulation, if it is an
$\nn$ bisimulation and it is coarser than every $\nn$-bisimulation
$\P'$ of $\N$. 
  \end{definition}

\begin{theorem}
Algorithm \ref{algo:mini} terminates and returns the coarsest
bisimulation $\P$ of $\N$.
\end{theorem}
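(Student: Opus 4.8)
The plan is to prove the two assertions separately: first that the while loop terminates, and then that the returned partition $\P$ is simultaneously an $\nn$-bisimulation and coarser than every $\nn$-bisimulation. Termination I would handle with a simple monovariant argument. After the initialization, $\P$ is a genuine partition of the finite node set $\bigcup_{i\in[\k]}\S_i$, so $\norm{\P}$ is bounded above by the total number of nodes. Each iteration picks an inconsistent pair, with split region $\S'$ in some layer $i$ and reference $\S$ in layer $i-1$, and replaces $\S'$ by $\sp{\S'}{\S}$; by the definition of inconsistency there are $\s'_1,\s'_2\in\S'$ with $\ps{i}{\N}(\S,\s'_1)\neq\ps{i}{\N}(\S,\s'_2)$, so $\sp{\S'}{\S}$ yields at least two groups and $\norm{\P}$ strictly increases. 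A strictly increasing integer bounded by the number of nodes changes only finitely often, so the loop halts regardless of the order in which inconsistent pairs are processed.

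For correctness I would first establish that the output is a bisimulation. The key is to carry the invariant that within every region all nodes share the same activation function and bias (conditions 1 and 2). This holds right after initialization, since $\sb{\S_i}$ splits each layer exactly along activation/bias classes, and it is preserved by the loop because $\sp{\S'}{\S}$ only refines existing regions and never merges nodes that previously disagreed. When the loop exits, its guard fails, so no pair is inconsistent; that is precisely condition 3: for every $\S\in\P_{i-1}$ and $\S'\in\P_i$, all nodes of $\S'$ have equal pre-sum with respect to $\S$. Hence $\P$ satisfies all three conditions. Note that layer $0$ is never the region being split, so $\P_0=\{\S_0\}$ throughout, which is consistent since every partition refines the one-block partition.

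The substantive part is showing $\P$ is coarsest, i.e. $\P'\finer\P$ for every bisimulation $\P'$. I would prove the stronger claim that $\P'\finer\P^{(t)}$ for the partition $\P^{(t)}$ after every step $t$, by induction on $t$. In the base case (the post-initialization partition), conditions 1 and 2 force each block of $\P'_i$ into a single activation/bias class, hence into a block of $\sb{\S_i}$, while $\P'_0$ trivially refines $\{\S_0\}$. For the inductive step, only the split block $\S'\in\P^{(t)}_i$ changes, so it suffices to show that any block $R\in\P'_i$ with $R\subseteq\S'$ lands inside a single group of $\sp{\S'}{\S}$, i.e. that all nodes of $R$ share one pre-sum with respect to $\S$.

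This last point is where the main work lies, and it rests on the additivity of pre-sums over the refinement. Since $\P'\finer\P^{(t)}$ on layer $i-1$, the reference block $\S$ is the disjoint union of the blocks $T\in\P'_{i-1}$ with $T\subseteq\S$. For any $\s'_1,\s'_2\in R$ we have $\s'_1\,\P'_i\,\s'_2$, so condition 3 of the bisimulation $\P'$ applied to each such $T$ gives $\ps{i}{\N}(T,\s'_1)=\ps{i}{\N}(T,\s'_2)$; summing over these $T$ and using the additivity $\ps{i}{\N}(\S,\s')=\sum_{T\subseteq\S}\ps{i}{\N}(T,\s')$, which is immediate from $\ps{i}{\N}(\S,\s')=\sum_{\s\in\S}\W{i}(\s,\s')$, yields $\ps{i}{\N}(\S,\s'_1)=\ps{i}{\N}(\S,\s'_2)$. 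Thus all of $R$ stays within one group of $\sp{\S'}{\S}$, establishing $\P'\finer\P^{(t+1)}$. Taking $t$ to be the final step gives $\P'\finer\P$, and together with $\P$ being a bisimulation this makes $\P$ coarsest. The delicate point to get right is that $\P'$ must be known to refine the \emph{input-side} partition $\P^{(t)}_{i-1}$ supplying the reference $\S$, so that the decomposition of $\S$ into $\P'$-blocks is available; this is exactly why the invariant must be carried for all layers simultaneously rather than reasoned about layer by layer.
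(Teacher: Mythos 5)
Your proposal is correct and follows essentially the same route as the paper's proof: the same strict-increase-of-region-count argument for termination, the same invariant (activation/bias agreement plus absence of inconsistent pairs at exit) for the bisimulation property, and the same additivity-of-pre-sums argument over the $\P'$-decomposition of the reference block $\S$ for coarseness. The only cosmetic difference is that you argue the key refinement step directly while the paper phrases it as a contradiction.
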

\begin{proof}
Termination of the algorithm is straightforward, since, if there
exists an inconsistent pair $(\S', \S)$, then $\sp{\S'}{\S}$ splits
$\S'$ into at least two regions. Hence, the number of regions in $\P$
strictly increases.
However, since, $\N$ has finitely many nodes, the number of
regions in $\P$ is upper-bounded.

Next, we will argue that $\P$ that is returned is an
$\nn$-bisimulation.
After the $\sb{\S_i}$ operations, $\P$ only consists of regions which
agree on the activation functions and biases.
When the while loop terminates, there are no inconsistent pairs, that
is, the pre-sum condition of the bisimulation definition is satisfied.
Hence, the value of $\P$ when exiting the while loop is an $\nn$-bisimulation.

To show that $\P$ is the coarsest bisimulation, it remains to show
that $\P$ is coarser than any bisimulation of $\N$.
Let $\P'$ be any bisimulation of $\N$.
We will show that $\P'$ is finer than $\P$ at every stage of the algorithm.

Note that after exiting the for loop, $\P$ contains the maximal
groups which agree on both the activation functions and biases.
Every region of $\P'$ has to agree on the activation functions and
biases, since it is a bisimulation. So, every region of $\P'$ is
contained in some regions of $\P$, that is, $\P' \finer \P$.

Next, we show that $\P' \finer \P$ is an invariant for the while loop,
that is, if it holds at the beginning of the loop, then it also holds
at the end of the loop. So, when the while loop exits, we still have
$\P' \finer \P$.
More precisely, we need to show that if $\P' \finer \P$, then
replacing $\S'$ by $\sp{\S'}{\S}$ will still result in a partition
that is coarser than $\P'$.
In particular, we need to ensure that each region of $\P'$ that is
contained in $\S'$ is not split by the $\sp{\S'}{\S}$ operation.
Suppose a region $\S'' \subseteq \S'$ of $\P'$ is split, then there exists $\s''_1,
\s''_2 \in \S''$ such that $\ps{i}{\N}(\S, \s''_1) \not= \ps{i}{\N}(\S,
\s''_1)$.
But $\S$ is the disjoint union of some sets $\{\S''_1, \cdots, \S''_l\}$
of $\P'$.
Hence,  $\ps{i}{\N}(\S''_i, \s''_1) \not= \ps{i}{\N}(\S''_i,
\s''_1)$ for some $i$, since $\ps{i}{\N}(\S, \s'') = \sum_i
\ps{i}{\N}(\S''_i, \s'')$.
However, this contradicts the fact that $\P'$ is an $\nn$-bisimulation.
\end{proof}

Next, we present some complexity results on checking if a partition is
a bisimulation/$\delta$-bisimulation, complexity of constructing
reduced systems from a bisimulation/$\delta$-bisimulation and the
complexity of computing the coarsest bisimulation.
% \begin{theorem}
%   Given an $\nn$ $\N$ and a partition $\P$, checking if $\P$ is a
%   bisimulation takes $\O(m)$, where 
%   $m$ is the number of edges of $\N$.
% \end{theorem}
% \begin{proof}
% We can iterate over all the nodes, region by region, to check if a
% node has the same activation function and bias as the previous node in
% the same region.
% Similarly, we can iterate over all the nodes and edges, region by region,
% that is, we iterate over all the nodes in $\S_i$ region by region, and
% then the edges into that node from $\S_{i-1}$ by region, and
% check if pre-sum matches with that of the previous node in the region.
% So, the total complexity is $\O(m)$, since, the number of nodes is
% upper-bounded by the number of edges.
% \end{proof}

\begin{theorem}
  Given an $\nn$ $\N$, a partition $\P$ and $\epsilon \geq 0$,
checking if $\P$ is a bisimulation and checking if
$\P$ is an $\delta$-$\nn$-bisimulation both take time $\O(m)$, where
$m$ is the number of edges of $\N$.
Further, constructing $\abs{\N}{\P}$ for some $\N' \in \absa{\N}{\P}{\delta}$
takes time $\O(m)$ as well.
\end{theorem}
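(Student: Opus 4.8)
The plan is to reduce all three claims to a single sweep over the edge set of $\N$ that simultaneously computes and checks the pre-sums $\ps{i}{\N}(\S, \s')$. I assume $\P$ is presented as a labelling that maps each node to its region, so that the region $[\s]$ of a node is available in constant time, and I treat the number of nodes as $\O(m)$, since every non-input node has an incoming edge and only the non-input layers figure in the bisimulation conditions. The activation and bias conditions are then the easy part: iterating once over the non-input nodes, I compare $\A{i}(\s')$ against a fixed representative of the region $[\s']$, which must match exactly, and for the bias I test equality in the exact case and $\dist{\b{i}(\s')}{\b{i}(\s'')} \le \delta$ in the approximate case, the latter certified by maintaining the running minimum and maximum bias within each region and checking that their gap is at most $\delta$. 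This costs $\O(m)$.

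The core of the argument is the pre-sum condition. I make a single pass over the edges: for each edge $(\s, \s')$ of layer $i$ I add $\W{i}(\s, \s')$ into an accumulator keyed by the pair $(\s', [\s])$ of target node and source region, producing every nonzero value $\ps{i}{\N}(\S, \s')$ in time $\O(m)$ and with at most $m$ stored entries. I then bucket these entries by the region pair $(\S, \S') = ([\s], [\s'])$; within each bucket the condition demands that all recorded pre-sums be equal (for bisimulation) or lie in a window of width $\delta$ (for $\delta$-bisimulation), the latter read off from the bucket's minimum and maximum. Bucketing and the per-bucket tests are linear in the number of entries, hence $\O(m)$ overall.

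The step I expect to be the genuine obstacle is accounting for \emph{absent} edges, since a node $\s' \in \S'$ with no edge from a source region $\S$ still satisfies $\ps{i}{\N}(\S, \s') = 0$, and this implicit zero must participate in the equality or range test for the pair $(\S, \S')$. I would resolve this by recording, for each region pair, the number of distinct target nodes that actually receive an edge from $\S$; whenever this count falls below $\norm{\S'}$ some node contributes a silent $0$, which I fold into the comparison (for equality the bucket must then be constantly $0$, and for the range test $0$ is taken among the extremes). Maintaining these counts adds only $\O(m)$, so the bound is preserved.

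Finally, constructing $\abs{\N}{\P}$, or any $\N' \in \absa{\N}{\P}{\delta}$, reuses exactly the data already computed: the node sets are the regions $\Sh_i = \P_i$, each reduced weight $\Wh{i}(\hat{\s}, \hat{\s}')$ is the pre-sum of the chosen representative, and $\bh{i}$ and $\Ah{i}$ copy that representative's bias and activation. All of these are read directly off the accumulators in one further $\O(m)$ pass, giving the claimed bound.
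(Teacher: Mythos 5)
Your proposal is correct and follows essentially the same approach as the paper: a single pass over the edges to accumulate pre-sums per (target node, source region) pair, exact equality checks for bisimulation, min/max gap checks per region for the $\delta$ case, and reuse of the accumulated data to build the reduced network. Your extra care about implicit zero pre-sums for region pairs with no connecting edges is a worthwhile refinement for sparse edge-list representations, though under the paper's definition $\W{i}$ is a total function on $\S_{i-1}\times\S_i$, so every pair is an edge and the issue does not arise.
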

\begin{proof}
To check if $\P$ is a bisimulation, we can iterate over all the nodes
in a region to check if they have same activation function, bias,
and pre-sums with respect to every region of $\P_{i-1}$.
In doing so, we need to access each node and each edge at most once,
hence, the complexity is bounded by $\O(m)$.
For the $\delta$-bisimulation, we need to check if the biases and
pre-sums are within $\epsilon$.
We can compute the biases and pre-sums in one pass over the network in time
$\O(m)$ as before.
Then we can find the max and min values of the
bias/pre-sum values within each region, and check if the max and min
values are within $\epsilon$, this will take time $\O(m)$.
For constructing the reduced system, we need to find the activation
functions and biases of all the nodes in the reduced system, and the
weight on the edge between two groups. The total computation needs to
access each edge at most once.
%
% Similarly, we can iterate over all the nodes and edges, region by region,
% that is, we iterate over all the nodes in $\S_i$ region by region, and
% then the edges into that node from $\S_{i-1}$ by region, and
% check if pre-sum matches with that of the previous node in the region.
% So, the total complexity is $\O(m)$, since, the number of nodes is
% upper-bounded by the number of edges.
%
% We can iterate over all the nodes,
% region by region, to check if a node has the same activation function
% as the previous node in the same region.
% However, for the bias, we need to check if the biases are within
% $\epsilon$ and it is not enough to check that with respect to the
% previous node in the region. We will sort the nodes in each region by
% the bias values and then check if the biases of the first and last
% nodes are within $\epsilon$.
% These two operations will take time $\O(n\log n)$.
% For the pre-sum condition, we can iterate over all the nodes of $\S_i$
% and compute their pre-sums, which take time $\O(nm)$, then we can sort
% the pre-sums in each group in $\S_i$ and check if the pre-sums of the
% first and the last node in each group are within $\epsilon$.
% Hence, the total time is $\O(n\log n + mn) = \O(mn)$.
\end{proof}

% \begin{theorem}
% Constructing $\abs{\N}{\P}$ or some $\N' \in \absa{\N}{\P}{\epsilon}$
% takes time $\O(n\hat{m})$, where $n$ is the number of nodes of $\N$
% and $\hat{m}$ is the number of edges of $\abs{\N}{\P}$. 
%  \end{theorem}
%  \begin{proof}
%    Constructing the reduced system consists of computing the
%    activation functions and biases of the nodes of the reduced system
%    which is upper-bounded by $\O(\hat{m})$, and the weights of
%    $\hat{m}$ edges, each of which consists of selecting a node from the
%    group and computing the pre-sum which takes $\O(n)$ time. 
% \end{proof}

\begin{theorem}
  The minimization algorithm has a time complexity of
  $\O(\hat{n}(m+n\log n))$, where $n$ is the number of nodes and $m$ is
  the number of edges of $\N$, and $\hat{n}$ is the number of nodes in
  the minimized neural network. 
\end{theorem}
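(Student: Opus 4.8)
The plan is to split the running time into the \emph{setup} phase (the initialization $\P = \{\S_0\}$ together with the \textbf{for} loop applying $\sb{\S_i}$) and the main \textbf{while} loop, and to obtain the bound by charging each splitting step $\O(m + n\log n)$ while bounding the number of such steps by $\hat{n}$.

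First I would bound the number of iterations of the \textbf{while} loop. Recall from Definition~\ref{def:red} that the nodes of $\abs{\N}{\P}$ are exactly the regions of the final partition, so $\hat{n} = \sum_{i \in [k]} \norm{\P_i}$. Each call $\sp{\S'}{\S}$ is made only on an inconsistent pair $(\S', \S)$, hence it splits $\S'$ into at least two nonempty regions and strictly increases $\sum_i \norm{\P_i}$ by at least one. Since this quantity is monotone nondecreasing throughout the loop and is bounded above by its final value $\hat{n}$, the loop performs at most $\hat{n}$ iterations.

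Next I would bound the cost of one iteration. The central observation is that, for the current partition, every pre-sum $\ps{i}{\N}(\S, \s')$ can be recomputed in a single pass over the edges: an edge contributes only to the pre-sum indexed by the region containing its source, so the whole table costs $\O(m)$. Storing, for each node $\s'$, its nonzero pre-sums as a list sorted by region identifier (region identifiers are integers, so this preprocessing is a linear bucket sort over the $\O(m)$ entries), detection of an inconsistent pair reduces to comparing each node's list against a representative of its region; this touches each pre-sum entry a constant number of times and hence costs $\O(m)$. Once a violating pair $(\S', \S)$ is located, $\sp{\S'}{\S}$ groups the nodes of $\S'$ by the \emph{scalar} value $\ps{i}{\N}(\S, \cdot)$; because these are real numbers this grouping uses a comparison sort, at cost $\O(\norm{\S'}\log\norm{\S'}) = \O(n\log n)$. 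Thus one iteration costs $\O(m + n\log n)$, and multiplying by the iteration bound yields $\O(\hat{n}(m + n\log n))$ for the loop.

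Finally I would verify that the setup phase is dominated: initializing $\P$ is $\O(n)$, and each $\sb{\S_i}$ groups the nodes of layer $i$ by their (activation, bias) key with a comparison sort, totaling $\O(n\log n)$ over all layers, both absorbed into the loop bound. The step I expect to be the main obstacle is precisely the per-iteration accounting: one must argue that recomputing the pre-sums \emph{from scratch} each round still stays within $\O(m)$, that inconsistency detection can be folded into the same pass without an extra factor, and that the real-valued pre-sums are what force the $n\log n$ sorting term rather than a linear-time bucketing. A Paige--Tarjan-style incremental refinement would likely sharpen the bound, but the stated $\O(\hat{n}(m + n\log n))$ follows directly from the recomputation argument sketched here.
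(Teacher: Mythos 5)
Your proposal is correct and follows essentially the same decomposition as the paper's proof: bound the number of while-loop iterations by $\hat{n}$ via the strictly increasing region count, charge each iteration $\O(m)$ for pre-sum computation and inconsistency detection plus $\O(n\log n)$ for the comparison sort in $\sp{\S'}{\S}$, and absorb the $\O(n\log n)$ setup cost of the $\sb{\S_i}$ calls. The extra detail you supply on the per-iteration accounting (bucketing by region identifier, comparing against a representative) only makes explicit what the paper leaves implicit.
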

\begin{proof}
  $\sb{\S_i}$ needs to sort the elements in every group by the activation
  function/bias values, hence, takes time $\O(n \log n)$.
Finding an inconsistent pair takes the same time as checking whether
$\P$ is a bisimulation, that is,  $\O(m)$.
$\sp{\S'}{\S}$ take time at most $\O(m)$ to compute the pre-sums
and $\O(n \log n)$ to split.
Replacing $\S'$ by $\sp{\S'}{\S}$ takes time at most $\O(\hat{n})$ which is
upper-bounded by $\O(n)$.
So, each loop takes time $\O(m + n \log n)$.
The number of iterations of the while loop is upper bounded by the
number of regions in the minimized neural network, that is, $\O(\hat{n})$.
Hence, the minimization algorithm has a runtime of $\O(\hat{n}(m+n\log
n))$.
\end{proof}

\section{Conclusions}
We presented the notions of bisimulation and approximate bisimulation for
neural networks that provide semantic equivalence and semantic
closeness, respectively, and are applicable to neural networks with a
wide range of activation functions.
These provide foundational theoretical tools for exploring the
trade-off between the amount of reduction and the semantic deviation
in an approximation based verification paradigm for neural networks.
Our future work will focus on experimental analysis of this trade-off
on large scale neural networks.
The notions of bisimulation explored are syntactic in nature, and we
will explore semantic notions in the future.
We provide a minimization algorithm for finding the smallest $\nn$
that is bisimilar to a given neural network.
Though a unique minimal network does not exist with respect to
$\delta$-bisimulations, we will explore heuristics for constructing
small networks that are $\delta$-bisimilar.

\bibliographystyle{splncs04}
\bibliography{references}
\end{document}